\DeclareAcronym{SLAM}{
  short = SLAM,
  long  = simultaneous localization and mapping,
  short-indefinite = a,
  long-indefinite = a
}
\DeclareAcronym{NBV}{
  short = NBV,
  long  = next-best-view,
  short-indefinite = a,
  long-indefinite = a
}
\DeclareAcronym{IPP}{
  short = IPP,
  long  = informative path planning,
  short-indefinite = an,
  long-indefinite = an
}
\DeclareAcronym{MAV}{
  short = MAV,
  long  = micro aerial vehicle,
  short-indefinite = a,
  long-indefinite = a
}
\newcommand{\etal}{\textit{et al.}\xspace}
\newcommand{\eg}{\textit{e.g.}\xspace}
\newcommand{\ie}{\textit{i.e.},\xspace}
\newcommand{\refsec}[1]{Section~\ref{#1}}
\newtheorem{assumption}{Assumption}
\newtheorem{proposition}{Proposition}[section]
\newtheorem{definition}{Definition}[section]
\newtheorem{corollary}{Corollary}[section]
\NewDocumentCommand{\todo}{o m}{\textcolor{red}{\textbf{TODO\IfNoValueTF{#1}{}{(#1)}:} #2}}
\NewDocumentCommand{\note}{o m}{\textcolor{orange}{\textbf{NOTE\IfNoValueTF{#1}{}{(#1)}:} #2}}
\newlist{todolist}{itemize}{2} \setlist[todolist]{label=$\square$}
\newcommand{\tfirst}{$\bar{\tau}_1$\xspace}
\newcommand{\topt}{$\bar{\tau}_\star$\xspace}
\newcommand{\tterm}{$\bar{\tau}_\infty$\xspace}
\newcommand{\argmin}[1]{\underset{#1}{\text{argmin}}\xspace}
\newcommand{\argmax}[1]{\underset{#1}{\text{argmax}}\xspace}
\newcommand{\parsec}[1]{\noindent\textbf{#1.}\xspace}
\begin{document}

\title{Traversing Mars: Cooperative Informative Path Planning to Efficiently Navigate Unknown Scenes}


\author{Friedrich M. Rockenbauer$^1$, Jaeyoung Lim$^1$,  Marcus G. Müller$^{1,2}$, Roland Siegwart$^1$, and Lukas Schmid$^3$

\thanks{$^1$Autonomous Systems Lab, ETH Z\"urich, Switzerland. {\tt \footnotesize friedrich.rockenbauer@ethz-asl.ch,  \{jalim, rsiegwart\}@ethz.ch}}%
\thanks{$^{2}$Institute of Robotics and Mechatronics, German Aerospace Center (DLR), Germany. {\tt \footnotesize marcus.mueller@dlr.de}}
\thanks{$^3$MIT SPARK Lab, Massachusetts Institute of Technology, USA. {\tt \footnotesize lschmid@mit.edu}}%
\thanks{This work was supported by ETH Research Grant AvalMapper ETH-10 20-1 and the Swiss National Science Foundation (SNSF) grant No. 214489.}%
}%

\maketitle

\begin{abstract}

The ability to traverse an unknown environment is crucial for autonomous robot operations.
However, due to the limited sensing capabilities and system constraints, approaching this problem with a single robot agent can be slow, costly, and unsafe. 
For example, in planetary exploration missions, the wear on the wheels of a rover from abrasive terrain should be minimized at all costs as reparations are infeasible.
On the other hand, utilizing a scouting robot such as a micro aerial vehicle (MAV) has the potential to reduce wear and time costs and increasing safety of a follower robot.
This work proposes a novel cooperative \ac{IPP} framework that allows a scout (e.g., an MAV) to efficiently explore the minimum-cost-path for a follower (e.g., a rover) to reach the goal.
We derive theoretic guarantees for our algorithm, and prove that the algorithm always terminates, always finds the optimal path if it exists, and terminates early when the found path is shown to be optimal or infeasible.
We show in thorough experimental evaluation that the guarantees hold in practice, and that our algorithm is 22.5\% quicker to find the optimal path and 15\% quicker to terminate compared to existing methods.

\end{abstract}



\section{Introduction}
\label{sec:introduction}
Navigating through an unknown environment is an essential capability for robot autonomy. Robots need to be able to explore the environment and plan a safe and efficient path~\cite{oleynikovaSafeLocalExploration2018}.
However, due to the limited information available to the robot, exploring the unknown environment while ensuring efficiency is challenging.
This fact is further exacerbated if the robot's motion incurs a notable cost, such as wear on parts of the robot, that should be minimized.

Recently, heterogeneous collaborative robotic teams have gained interest as a more robust and capable alternative to deploying a single robot, by taking advantage of the capabilities of each agent~\cite{RobotTeamsSwarms2019}. 
For navigation, if the travel cost is asymmetric between different robots, having a robot scout for the other, which we denote a follower robot, can be advantageous. 
Therefore, the combination of aerial and ground robots has been used in various applications~\cite{marconi2012sherpa, fankhauser2016collaborative, delmericoActiveAutonomousAerial2017, youngRotorcraftMarsScouts2002, dlr136354} to take advantage of the agile characteristic of an aerial scout to explore unknown environments~\cite{delmericoActiveAutonomousAerial2017, folsomScalableInformationtheoreticPath2021, fankhauser2016collaborative}. 
The explored environment can then be used to plan a path for the follower robot to transport heavy payloads or operate precise instruments. 
However, naively exploring the scene with an aerial scout could require extensive scene coverage.
For example, achieving high coverage of the environment may not always contribute to discovering a better rover path~\cite{fankhauser2016collaborative}. 
Since flight time is a major limitation of aerial robots, minimizing the exploration required by the scout is essential for longer-range missions.
In addition, existing methods do not explicitly consider the traversal cost of the follower robot and often rely on heuristics such as frontiers, whereas the geometrically shortest path may be sub-optimal in terms of accrued cost.

\begin{figure}[t]
    \centering
    \includegraphics[width=\linewidth]{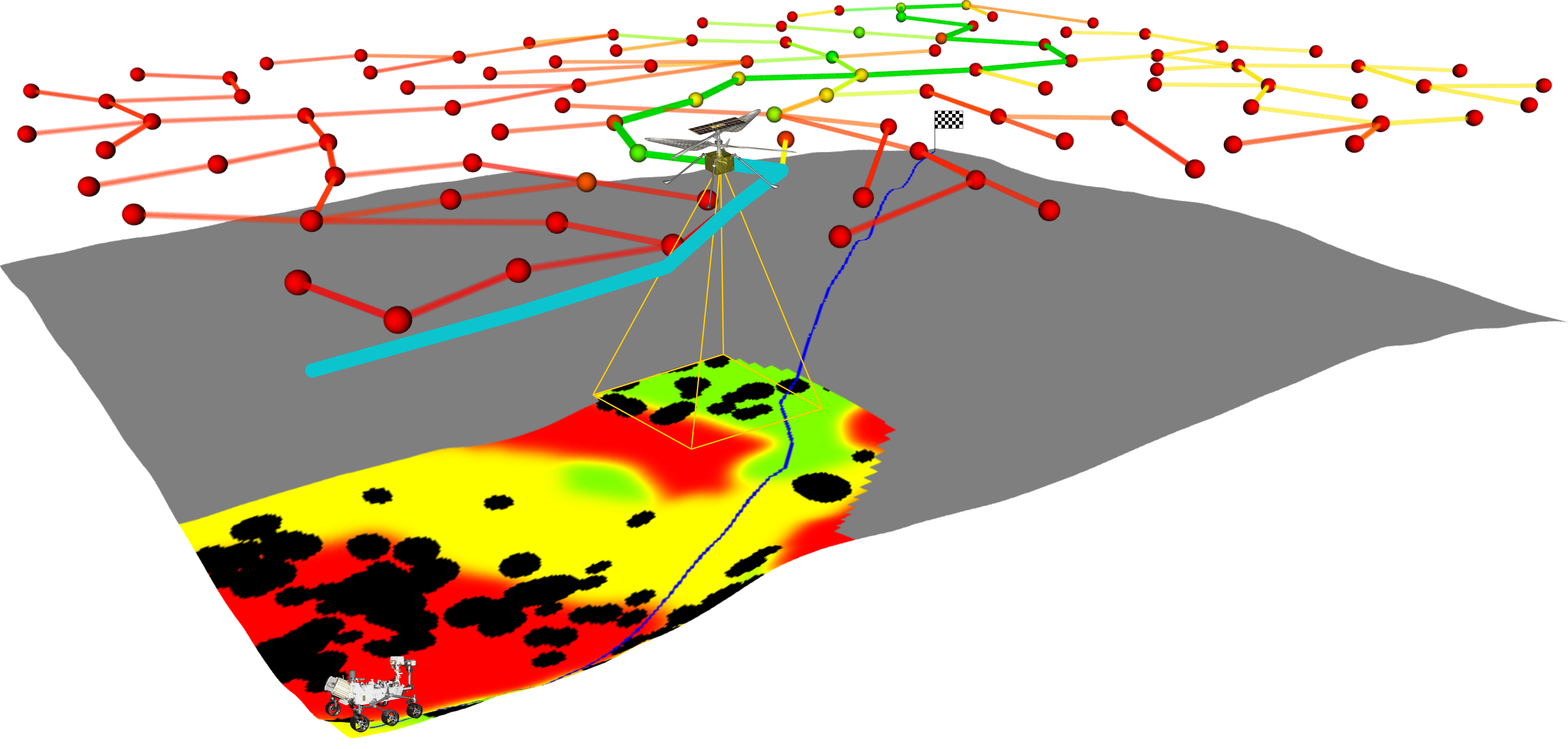}
    \caption{Illustration of our problem and approach. A follower robot (\eg a rover, bottom) has to traverse through unknown space to reach a goal (top) while minimizing traversal cost (soil color, from red=high to green=low). The goal is for a scouting robot (\eg an MAV) to explore the scene such that the follower path is optimal. 
    Our approach generates an optimistic follower path (blue) guiding the scouting-IPP. For illustration, sparsified IPP-samples are shown colored by information gain (colored from red=low to green=high), with the past (teal) and currently planned path (green) of the scout.}
    \label{fig:paper_overview}
    \vspace{-10pt}
\end{figure}


In this work, we develop a novel \ac{IPP} approach to guide the exploration of a scout robot to find the optimal traversable path of a follower robot in unknown scenes. 
We propose a novel utility function to guide the exploration of the aerial scout to cover the lowest-cost traversable path for the follower robot. 
We derive theoretical guarantees for our algorithm, showing that our approach always finds the optimally traversable path if it exists.
We further propose two termination criteria that allow early termination of exploration by determining that the current path is in fact the optimal path, or that no feasible path can exist.

While the developed methodology is general, we evaluate our method in an extra-terrestrial exploration scenario, where the cost represents wear on the follower rover wheels as a function of the traversed terrain \cite{swanAI4MARSDatasetTerrainAware2021}.
The aerial scout is tasked to discover the optimal traversable path for the follower, where optimal refers to minimizing the rover wear, which can significantly improve safety and extend the lifetime of the rover. 
We show in thorough experimental evaluations that the derived guarantees hold in practice, and that our proposed method significantly outperforms existing approaches in terms of scouting time required and quality of the discovered follower path.
We make the following contributions: 
\begin{itemize}
    \item We propose a novel follower-path-aware \ac{IPP} formulation for an agile scout, minimizing the traversal cost of a follower robot in unknown scenes.
    \item We formally prove the optimality and completeness of the proposed algorithm and derive termination criteria once the path is shown to be optimal or no path exists.
    \item We show in thorough experimental evaluation that the guarantees hold in practice, and that our algorithm finds the optimal path  $22.5\%$ quicker and terminates $15\%$ faster compared to existing methods.
    We release our implementation as open-source\footnote{Released upon acceptance at \url{https://github.com/ethz-asl/scouting-ipp}.}. 
\end{itemize}
\section{Related Work} 
\label{sec:rel_work}
\subsection{Informative Path Planning} 
\label{sec:rel_work_ipp}

Using \ac{IPP} methods to explore an unknown scene has proven useful for various applications from exploration~\cite{oleynikovaSafeLocalExploration2018} to inspection~\cite{bircherThreedimensionalCoveragePath2016} and information gathering~\cite{moon2022tigris}. 
Frontier-based methods~\cite{yamauchiFrontierbasedApproachAutonomous1997, dengFrontierbasedAutomaticdifferentiableInformation2020, dengRoboticExplorationUnknown2020} use intermediate goals along the boundaries of known and unknown space.
While this guarantees complete exploration, it may lead to sub-optimal exploration speed.
Alternatively, sampling-based methods find incremental next-best-views to explore the unstructured environment~\cite{hollingerSamplingbasedRoboticInformation2014, Schmid20ActivePlanning, bircherRecedingHorizonNextBestView2016, oleynikovaSafeLocalExploration2018, moon2022tigris}. 
In addition, using the utility function of~\iac{IPP} methods to incorporate downstream task objectives has been successful in various applications such as reconstruction~\cite{Schmid20ActivePlanning, bircherThreedimensionalCoveragePath2016}, navigation~\cite{oleynikovaSafeLocalExploration2018, dengRoboticExplorationUnknown2020}, or semantic scene understanding\cite{zurbrugg2022embodied, papatheodorou2023finding}. 
In this work, we also leverage this flexibility to propose a novel follower-path-aware utility function to guide the exploration of the aerial scout.

In recent years, learning-based methods have made a notable impact on many aspects of exploration planning.
Most approaches can be grouped into learning to improve parts of a classical planner~\cite{hepp2018learn, schmid2022fast}, learning environmental features to improve exploration~\cite{schmid2022sc, tao2023seer}, or learning the next best action end-to-end, \eg using reinforcement learning~\cite{garaffa2021reinforcement, cao2023ariadne}.
While these approaches are applicable to our problem, we focus on a classical planning approach in this work to derive thorough performance guarantees.

\subsection{Collaborative Robot Teams} 
\label{sec:rel_work_team}
Collaborative robotic teams have found impact in various applications such as mapping~\cite{Michael2014, drones4040079}, localization~\cite{9381949, schmuck2019, sasakiWhereMapIterative2020}, navigation~\cite{fankhauser2016collaborative, kaslinCollaborativeLocalizationAerial2016} and exploration~\cite{kaslinCollaborativeLocalizationAerial2016}. 
Especially aerial-ground robotic teams have gained interest as both agents can take advantage of their different locomotion modalities~\cite{fankhauser2016collaborative, kaslinCollaborativeLocalizationAerial2016, dangGraphbasedSubterraneanExploration2020, tranzattoCERBERUSAutonomousLegged2021}.
\cite{kaslinCollaborativeLocalizationAerial2016}. Dang~\etal\cite{dangGraphbasedSubterraneanExploration2020} and Tranzatto~\etal~\cite{tranzattoCERBERUSAutonomousLegged2021} propose a graph-based planner to first explore the environment with an aerial scout and use the resulting map to navigate a follower ground robot.
Most closely related to us, Delmerico~\etal~\cite{delmericoActiveAutonomousAerial2017} and Zhang~\etal~\cite{zhangFastActiveAerial2022} both optimize the total response time of the team and weigh frontiers by a combination of expected scout exploration time and follower time-to-go.
However, both methods are based on frontiers and can thus be sensitive to the distribution of environments and may perform poorly in unstructured environments. 
In addition, we consider a general follower cost that can \eg be a function of the traversed space.
Similarly, Folsom~\etal~\cite{folsomScalableInformationtheoreticPath2021} use an aerial scout to discover the traversibility cost map of a rover.
In contrast to our problem, they assume a prior map of the cost field including uncertainties is given, and plan a path to minimize the uncertainty of the follower path. 

Finally, previous works considering aerial-ground robot teams do not consider termination criteria that provide performance guarantees. 
In this work, we propose criteria for terminating the exploration for the aerial scout and prove that the proposed algorithm is complete and the discovered solution is optimal if one exists.

\section{Problem Statement}
\label{sec:problem}
This work addresses the two-body collaborative \emph{scout and follower} planning problem in unknown, unstructured environments.
We consider a heterogeneous robotic team composed of a \emph{scout} $S$, e.g. an aerial drone, and a \emph{follower} $F$, e.g. a ground rover.
Both are operating in a continuous space $\mathcal{X}$, divided into traversable space $\mathcal{S}^+$ and intraversable space $\mathcal{S}^- =\mathcal{X} \setminus \mathcal{S}^+$. 
The \emph{scout} is assumed to be more maneuverable than the \emph{follower}, thus the traversable space of the \emph{follower} $\mathcal{S}_F^+$ is assumed to be a subset of that of the \emph{scout} $\mathcal{S}^+_F \subseteq \mathcal{S}^+_S$. 

Each element in the permitted space has a bounded cost function for the scout $C_S: \mathcal{S}^+_S \mapsto [c_S^{min}, c_S^{max}] \subset \mathbb{R}^+$ and follower $C_F: \mathcal{S}^+_F \mapsto [c_F^{min}, c_F^{max}]\subset \mathbb{R}^+$. 
The goal is to find the optimal path for the \emph{follower} $P_F^\star: [0, T_F] \mapsto \mathcal{X}$ from $P_F(0) = \bm{x}_{start}$ to $P_F(T_F) = \bm{x}_{goal}$, where $T_F$ is the time to complete the path\footnote{Note that for a constant velocity model this can easily be converted to optimize over path lengths instead.}.
We denote the accumulated traversability cost of a path as $Q_F(P_F, C_F) = \int_{P_F} C_F(\bm{x})d\bm{x}$.

The optimal follower path $P_F^\star$ can thus be defined as:
\begin{align} \label{eq:optimal_rover_cost}
    P_F^\star = &\ \argmin{P_F} \quad Q_F(P_F, C_F) \\
    s.t. \quad & P_F(t) \in \mathcal{S}^+_F & \forall t \in [0, T_F] \nonumber\\ 
    & P_F(0) = \bm{x}_{start},\quad P_F(T_F) = \bm{x}_{goal}\nonumber
\end{align}

We assume the cost field $C_F$ and traversable space $S_F^+$ is originally unknown, and the scout has to explore the environment to establish a safe and optimal follower path. 
The scout incrementally creates a partial cost field $M: \mathcal{S}^E \mapsto C_F$, where $\mathcal{S}^E$ denotes the explored space.

Thus, we can state the \emph{scouting IPP} problem as~\eqref{eq:problem_scout}, where we find the minimal cost path of the scout to discover the optimal path of the follower:

\begin{align}\label{eq:problem_scout}
    P_S^\star =\ & \argmin{P_S} \int_{P_S} C_S(\bm{x})d\bm{x}\\
    s.t. \quad &P_S(0) = \bm{x}_{start}\nonumber\\
    & P_S(t) \in \mathcal{S}^+_S & \forall t \in [0, T_S]\nonumber\\ %
    & Q_F(P_F^{\star}, M)= Q_F(P_F^\star, C_F)\nonumber
\end{align}
The last condition notes that the optimal cost of the follower path based on the partial map $M$ should be the same as the optimal cost of the full cost field as defined in~\eqref{eq:optimal_rover_cost}.

We note that this \ac{IPP} formulation assumes a sufficient cost-budget of the scout such that the optimal follower path can always be found.
Nonetheless, we show in Sec.~\ref{sec:evaluation} that this objective also results in favorable performance when the scout is subject to an insufficient budget constraint.
\section{Follower Path Planning}
\label{sec:follower_path_planning}
\subsection{Preliminaries}
\label{sec:planning_for_follower}
The proposed method can be paired with any follower planner that satisfies the following assumption.

\begin{assumption}[Follower Planner]
    The follower planner is complete and optimal.
    \label{ass:rover_planner}
\end{assumption}

In this work, we employ A$^\star$ search over all 8-connected cells of a 2D map $M$, which can be shown to satisfy assumption~\ref{ass:rover_planner}~\cite{russell2010artificial}.
We note that many popular sampling-based planners such as RRT* and PRM*~\cite{karamanSamplingbasedAlgorithmsOptimal2011} asymptotically satisfy assumption~\ref{ass:rover_planner}. 
While both optimality and completeness can not be strictly guaranteed within finite time, we empirically find that the assumption holds given sufficient time for the planner to converge.


Since we assume the planning is done on a partial map constructed by the \emph{scout}, we introduce \emph{feasible} and \emph{optimistic} paths that can be found on a partial map.

\begin{definition}[Feasible Path]\label{def:feasible_path}
    A feasible path $\Bar{P}_F$ in map $M$ is the optimal path \eqref{eq:optimal_rover_cost} defined in the feasible set $\mathcal{S}^E \cap \mathcal{S}_F^+$.
    \begin{align}
        \Bar{P}_F = &\argmin{P_F} \quad Q_F(P_F, M)&\\
        &s.t.\quad P_F(t) \in \mathcal{S}^E \cap \mathcal{S}_F^+&\forall t\in [0, T_F]\nonumber
    \end{align}
\end{definition}

The feasible path is useful for determining the feasibility of the problem as well as the global optimality of the found path. 
Intuitively, the added constraint of the feasible path requires it to lie fully in the explored space, thus guaranteeing that the path is traversable.

\begin{definition}[Optimistic Map and Optimistic Path]
    We define the optimistic map $\widetilde{M}$ as the map where all unknown space is considered traversable and of a constant cost $c$.
    The optimistic path $\widetilde{P}$ is the optimal path \eqref{eq:optimal_rover_cost} in the optimistic map $\widetilde{M}$.
    \begin{align}
        \widetilde{P}_F = &\argmin{P_F} \quad Q_F(P_F, \widetilde{M})\\
        &s.t.\quad P_F(t) \in (\mathcal{S}^E \cap \mathcal{S}_F^+)\cup (\mathcal{X} \setminus \mathcal{S}^E)&\forall t\in [0, T_F]\nonumber
    \end{align}
\end{definition}

In contrast to feasible paths, optimistic paths can be generated as candidate paths to guide the scouting planner.


\subsection{Feasibility}
In this section, we look into the conditions on whether there exists a true feasible \emph{follower} path based on the information in the partial map.

\begin{proposition}[Feasible Problem] \label{prop:problem_feasible}
    Once a feasible path is found, the problem is feasible.
    \begin{align}
        \exists \bar{P}_F \rightarrow \exists P^\star_F \nonumber
    \end{align}
\end{proposition}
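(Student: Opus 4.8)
The plan is to reduce the claim to a statement about non-emptiness of a constraint set: I would show that the existence of a feasible path $\bar{P}_F$ immediately produces a valid candidate for the full follower problem~\refequ{eq:optimal_rover_cost}, and then argue that a non-empty constraint set together with a bounded-below objective guarantees that an optimal path $P^\star_F$ exists. The whole argument hinges on a single set inclusion, so I expect the core to be short.

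First I would invoke the inclusion $\mathcal{S}^E \cap \mathcal{S}_F^+ \subseteq \mathcal{S}_F^+$, which holds trivially since intersecting with the explored space only restricts the true traversable space. By Definition~\ref{def:feasible_path}, any feasible path satisfies $\bar{P}_F(t) \in \mathcal{S}^E \cap \mathcal{S}_F^+$ for all $t$, together with the boundary conditions $\bar{P}_F(0) = \bm{x}_{start}$ and $\bar{P}_F(T_F) = \bm{x}_{goal}$. Combining these with the inclusion gives $\bar{P}_F(t) \in \mathcal{S}_F^+$ under the same boundary conditions, which are exactly the constraints of~\refequ{eq:optimal_rover_cost}. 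Hence $\bar{P}_F$ is a feasible point of the full follower problem, so its constraint set is non-empty.

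Next I would establish existence of the minimizer. Since the follower cost field is bounded below by $c_F^{min} > 0$, the accumulated cost $Q_F(P_F, C_F) = \int_{P_F} C_F(\bm{x}) d\bm{x}$ is bounded below and strictly positive over the non-empty constraint set. Invoking Assumption~\ref{ass:rover_planner}, the follower planner is complete and optimal, so the existence of at least one valid path, namely $\bar{P}_F$, implies that the planner returns a path achieving the minimum cost; equivalently, the infimum in~\refequ{eq:optimal_rover_cost} is attained by some $P^\star_F$, which is precisely $\exists P^\star_F$.

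The main obstacle I anticipate is the attainment of the infimum: a non-empty constraint set and a bounded-below objective alone do not guarantee a minimizer in a general continuous path space without a compactness or lower-semicontinuity argument. The cleanest way to sidestep this is to lean on Assumption~\ref{ass:rover_planner}, where completeness ensures a path is found whenever one exists and optimality ensures the returned path achieves the minimum, so that the mere existence of $\bar{P}_F$ suffices to conclude the existence of $P^\star_F$ without a separate variational analysis.
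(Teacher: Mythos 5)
Your proposal is correct, and its core step is the same as the paper's: the feasible path $\bar{P}_F$ lies in $\mathcal{S}^E \cap \mathcal{S}_F^+ \subseteq \mathcal{S}_F^+$ and satisfies the boundary conditions, hence it is an admissible candidate for~\refequ{eq:optimal_rover_cost}, so the problem is feasible. The framing differs in two ways worth noting. First, the paper reaches this conclusion via monotonicity of exploration ($\mathcal{S}^E_k \subseteq \mathcal{S}^E_{k+i}$), arguing that $\bar{P}_F$ \emph{remains} feasible as the map grows; that temporal argument is really about persistence of feasibility and is not strictly needed for the implication $\exists \bar{P}_F \rightarrow \exists P^\star_F$, whereas your static set-inclusion argument isolates exactly what is needed. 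Second, you explicitly confront the gap the paper silently elides: a non-empty constraint set does not by itself yield an attained minimum in a continuous path space, so ``the problem is feasible'' and ``the optimal path $P^\star_F$ exists'' are not interchangeable without an additional argument. Your resolution --- leaning on Assumption~\ref{ass:rover_planner} (completeness and optimality of the follower planner), which in the paper's discretized $A^\star$ setting reduces to minimization over finitely many simple grid paths --- is a legitimate way to close that gap, and it makes your write-up slightly more rigorous than the paper's own three-line proof.
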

\begin{proof}
    Let us define at a given time $k$ an explored space $\mathcal{S}^E_k$. 
    Further exploration can only make the explored space larger $\mathcal{S}^E_k \subseteq \mathcal{S}^E_{k+i} \forall i > 0$.
    Therefore, the path $\bar{P}_F$ must remain feasible and is by definition \ref{def:feasible_path} a valid solution to \eqref{eq:optimal_rover_cost}.
\end{proof}

Initially, no feasible path exists as the environment is completely unknown. 
We thus instead compute an optimistic path $\widetilde{P}_F$ to guide the scouting-IPP. 

\begin{proposition}[Infeasible Problem] \label{prop:termination_infeasible}
    If an optimistic path does not exist, the problem is infeasible.
    \begin{align}
        \not\exists \widetilde{P}_F \rightarrow \not\exists P_F^\star \nonumber
    \end{align}
\end{proposition}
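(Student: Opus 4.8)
The plan is to prove the statement by contraposition: rather than arguing directly that the non-existence of $\widetilde{P}_F$ forbids $P_F^\star$, I would show that the existence of $P_F^\star$ forces the existence of $\widetilde{P}_F$, which is logically equivalent. The whole argument rests on a single set-inclusion comparing the constraint set of the true problem~\eqref{eq:optimal_rover_cost} with that of the optimistic path.

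First I would make that inclusion explicit. Decompose the true follower-traversable space relative to the currently explored region as $\mathcal{S}_F^+ = (\mathcal{S}_F^+ \cap \mathcal{S}^E) \cup (\mathcal{S}_F^+ \setminus \mathcal{S}^E)$. Since $\mathcal{S}_F^+ \setminus \mathcal{S}^E \subseteq \mathcal{X} \setminus \mathcal{S}^E$, this gives $\mathcal{S}_F^+ \subseteq (\mathcal{S}^E \cap \mathcal{S}_F^+) \cup (\mathcal{X} \setminus \mathcal{S}^E)$, where the right-hand side is exactly the admissible set of the optimistic path. In words: every point on which the true follower may travel is also admissible under the optimistic map, because unknown space is there treated as freely traversable at constant cost $c$.

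Next I would use this inclusion to transfer feasibility. Assume $P_F^\star$ exists; by construction it connects $\bm{x}_{start}$ to $\bm{x}_{goal}$ with $P_F^\star(t) \in \mathcal{S}_F^+$ for all $t$. By the inclusion above, $P_F^\star(t)$ also lies in the optimistic admissible set for all $t$, so $P_F^\star$ is itself a feasible candidate for the optimistic optimization. Hence that optimization has a non-empty feasible set; invoking the optimality and completeness of the underlying planner (Assumption~\ref{ass:rover_planner}) together with the boundedness of the cost field guarantees that a minimizer $\widetilde{P}_F$ is attained. The contrapositive, $\not\exists \widetilde{P}_F \Rightarrow \not\exists P_F^\star$, is precisely the claim.

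The main obstacle is not conceptual but one of precise bookkeeping: I must verify that the decomposition of $\mathcal{S}_F^+$ against $\mathcal{S}^E$ is exhaustive so that no point of $\mathcal{S}_F^+$ escapes the optimistic admissible set, and I must ensure that "a feasible candidate exists" genuinely upgrades to "a minimizer exists." The latter is where the assumptions, rather than the set inclusion alone, do the work — attainment of the optimum follows from Assumption~\ref{ass:rover_planner} and the bounded, positive cost $C_F$, which I would flag explicitly so the step is not conflated with mere non-emptiness of the feasible set.
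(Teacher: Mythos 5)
Your proof is correct, but it takes a different route than the paper's. The paper argues via monotonicity of the optimistic admissible set under exploration: it fixes a time $k$, claims that the optimistic admissible set can only shrink as $\mathcal{S}^E$ grows (so the admissible set at any later time is contained in the current one), and concludes that if no optimistic path exists now, no feasible path $\bar{P}_F$ can ever emerge through further exploration. You instead argue by contraposition with a single static inclusion: $\mathcal{S}_F^+ \subseteq (\mathcal{S}^E \cap \mathcal{S}_F^+) \cup (\mathcal{X} \setminus \mathcal{S}^E)$, so the true optimal path $P_F^\star$, if it exists, is itself admissible in the optimistic problem, forcing $\widetilde{P}_F$ to exist. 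Your version has two advantages: it proves the proposition exactly as stated (about $P_F^\star$, whereas the paper's proof concludes about $\bar{P}_F$ and leaves the link to $P_F^\star$ implicit), and it avoids the time-indexed induction entirely --- note that the paper's displayed inclusion is in fact written in the reverse direction from what its own prose asserts, a pitfall your static argument sidesteps. The paper's version, in exchange, makes explicit the operational point that matters for the algorithm: further exploration can never re-open the problem, which is why termination upon $\not\exists \widetilde{P}_F$ is safe at whatever time it occurs. Your explicit appeal to Assumption~\ref{ass:rover_planner} to pass from non-emptiness of the optimistic constraint set to attainment of a minimizer is a fair point of rigor that the paper glosses over; both proofs implicitly need it.
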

\begin{proof}
    Consider a given time $k$ as in proposition \ref{prop:problem_feasible}.
    Since the optimistic path $\widetilde{P}_F$ is inside the set $(\mathcal{S}^E_k \cap \mathcal{S}_F^+)\cup (\mathcal{X}\mathcal{S}^E_k)$, the feasible space with future exploration is a subspace of the current feasible set.
    \begin{align}
        (\mathcal{S}^E_k \cap \mathcal{S}_F^+)\cup (\mathcal{X} \setminus \mathcal{S}^E_k) \subseteq (\mathcal{S}^E_{k+1} \cap \mathcal{S}_F^+)\cup (\mathcal{X} \setminus \mathcal{S}^E_{k+1})
    \end{align}
    Therefore, if no optimistic path $\widetilde{P}_F$ exists, also a feasible path $\Bar{P}_F$ cannot exist by further exploring the environment.
\end{proof}

Thus, exploring the unknown scene will either reveal a feasible path $\bar{P}$, or a detection that no feasible path exists. 
In the latter case, we can safely terminate without unnecessarily exploring the environment.


\subsection{Optimality}
Once the feasibility of a solution has been established, we want to find the globally optimal follower path.
To this end, the cost of the current feasible path can be considered as the upper bound of the accumulated follower cost.

\begin{proposition}[Cost Upper Bound] \label{prop:upper_bound}
    The cost of the feasible path is an upper bound on the optimal path cost.
    \begin{align}
       Q_F(\Bar{P_F}, M) \geq Q_F(P_F^{\star}, C_F)
    \end{align}
\end{proposition}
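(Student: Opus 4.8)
The plan is to exploit two facts: that the partial map $M$ coincides with the true cost field $C_F$ on the explored region, and that the feasible path $\bar{P}_F$ is itself an admissible candidate for the true optimization problem \eqref{eq:optimal_rover_cost}. First I would observe that, by construction, the scout records the true follower cost wherever it has explored, so $M(\bm{x}) = C_F(\bm{x})$ for every $\bm{x} \in \mathcal{S}^E$. Since Definition \ref{def:feasible_path} constrains $\bar{P}_F$ to lie entirely within $\mathcal{S}^E \cap \mathcal{S}_F^+ \subseteq \mathcal{S}^E$, the integrand in $Q_F(\bar{P}_F, M) = \int_{\bar{P}_F} M(\bm{x}) \, d\bm{x}$ agrees pointwise with $C_F$ along the path, yielding
\begin{align}
    Q_F(\bar{P}_F, M) = Q_F(\bar{P}_F, C_F). \nonumber
\end{align}

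Next I would argue that $\bar{P}_F$ is feasible for the true problem \eqref{eq:optimal_rover_cost}. Because $\bar{P}_F$ lies in $\mathcal{S}^E \cap \mathcal{S}_F^+ \subseteq \mathcal{S}_F^+$ and connects $\bm{x}_{start}$ to $\bm{x}_{goal}$ by Definition \ref{def:feasible_path}, it satisfies every constraint of \eqref{eq:optimal_rover_cost}; moreover Proposition \ref{prop:problem_feasible} guarantees that its existence makes $P_F^\star$ well-defined. Since $P_F^\star$ is by definition the minimizer of $Q_F(\cdot, C_F)$ over that feasible set, its cost cannot exceed that of any admissible candidate, in particular $\bar{P}_F$, so
\begin{align}
    Q_F(P_F^\star, C_F) \leq Q_F(\bar{P}_F, C_F). \nonumber
\end{align}
Chaining this inequality with the equality from the previous step gives $Q_F(\bar{P}_F, M) = Q_F(\bar{P}_F, C_F) \geq Q_F(P_F^\star, C_F)$, which is the claim.

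The step I expect to be the main obstacle is the first one, namely justifying $M = C_F$ on $\mathcal{S}^E$. This is essentially a perfect-sensing assumption implicit in the notation $M: \mathcal{S}^E \mapsto C_F$, and it is the load-bearing part of the argument: the comparison of $P_F^\star$ against $\bar{P}_F$ is only meaningful once the map cost along $\bar{P}_F$ is tied to the true cost. I would therefore state this assumption explicitly. I would also note that the result is robust to a conservative map: if $M$ merely over-estimates $C_F$ on explored cells, the first step becomes $Q_F(\bar{P}_F, M) \geq Q_F(\bar{P}_F, C_F)$, and the upper bound $Q_F(\bar{P}_F, M) \geq Q_F(P_F^\star, C_F)$ still holds.
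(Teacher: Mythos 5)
Your proof is correct and takes essentially the same route as the paper's: both arguments observe that $\bar{P}_F$ lies in $\mathcal{S}^E \cap \mathcal{S}_F^+ \subseteq \mathcal{S}_F^+$, hence is an admissible candidate for the true problem \eqref{eq:optimal_rover_cost}, so its cost cannot be below that of the minimizer $P_F^\star$. The only difference is that you make explicit the identity $M = C_F$ on $\mathcal{S}^E$ (the perfect-sensing step), which the paper leaves implicit in its notation $M: \mathcal{S}^E \mapsto C_F$ --- a worthwhile clarification, but not a different proof.
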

\begin{proof}
    Since the feasible set $\mathcal{S}^E \cap \mathcal{S}_F^+$ is a subset of the full traversable set $\mathcal{S}_F^+$ and $C_F>0$, the cost of the feasible path is greater or equal to the optimal cost.
\end{proof}

Using the optimistic path, we aim to establish a lower bound on the follower path cost which we optimize as a surrogate objective.

\begin{proposition}[Cost Lower Bound] \label{prop:lower_bound}
    The cost of the optimistic path in the map completed with $c_F^{min}$ is the lower bound of the optimal path cost.
\end{proposition}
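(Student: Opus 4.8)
The plan is to establish $Q_F(\widetilde{P}_F, \widetilde{M}) \leq Q_F(P_F^\star, C_F)$ via a sandwich argument, mirroring the structure of the upper-bound proof in Proposition~\ref{prop:upper_bound}. I would show two things: that the true optimal path $P_F^\star$ is itself an admissible candidate for the optimistic problem, and that evaluating any fixed path under the optimistic map $\widetilde{M}$ (with constant $c = c_F^{min}$) never exceeds its cost under the true field $C_F$. Combining these with the optimality of $\widetilde{P}_F$ over the optimistic feasible set closes the argument.

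First I would prove domain containment, \ie that the full follower-traversable set is contained in the optimistic feasible set, $\mathcal{S}_F^+ \subseteq (\mathcal{S}^E \cap \mathcal{S}_F^+) \cup (\mathcal{X} \setminus \mathcal{S}^E)$. This follows by a case split on any $\bm{x} \in \mathcal{S}_F^+$: if $\bm{x}$ is explored it lies in $\mathcal{S}^E \cap \mathcal{S}_F^+$, and otherwise it lies in $\mathcal{X} \setminus \mathcal{S}^E$. Since $P_F^\star$ lies entirely in $\mathcal{S}_F^+$ and shares the start and goal constraints with $\widetilde{P}_F$, it is therefore a feasible candidate for the optimistic optimization, so optimality of $\widetilde{P}_F$ yields $Q_F(\widetilde{P}_F, \widetilde{M}) \leq Q_F(P_F^\star, \widetilde{M})$.

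Second I would establish the pointwise cost underestimation $\widetilde{M}(\bm{x}) \leq C_F(\bm{x})$ for every $\bm{x} \in \mathcal{S}_F^+$: on explored space the map reports the true cost, so equality holds, whereas on unknown space $\widetilde{M}(\bm{x}) = c_F^{min} \leq C_F(\bm{x})$ by the lower bound of the follower cost function. Integrating this inequality along $P_F^\star$ gives $Q_F(P_F^\star, \widetilde{M}) \leq Q_F(P_F^\star, C_F)$, and chaining with the previous step completes the bound.

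I expect the proof to be short, with the only delicate points being bookkeeping: ensuring the start/goal boundary conditions are shared so that $P_F^\star$ genuinely qualifies as an optimistic candidate, and applying the pointwise comparison only where it is needed, namely along $P_F^\star$, all of which lies in $\mathcal{S}_F^+$. The main subtlety to flag explicitly is that the choice $c = c_F^{min}$ is essential, since any larger constant would break the underestimation on unknown cells; I would therefore highlight where this assumption enters rather than treating it as incidental.
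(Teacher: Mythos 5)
Your proof is correct, and it takes a genuinely different route from the paper's. You give a static relaxation (sandwich) argument at a fixed map state: the optimistic feasible set contains the true one, $\mathcal{S}_F^+ \subseteq (\mathcal{S}^E \cap \mathcal{S}_F^+) \cup (\mathcal{X} \setminus \mathcal{S}^E)$, so $P_F^\star$ is admissible for the optimistic problem, and the optimistic map pointwise underestimates the true field, $\widetilde{M}(\bm{x}) \leq C_F(\bm{x})$, which chains into $Q_F(\widetilde{P}_F, \widetilde{M}) \leq Q_F(P_F^\star, \widetilde{M}) \leq Q_F(P_F^\star, C_F)$. The paper instead argues by monotonicity over the exploration process: because unknown cells carry the minimal cost $c_F^{min} \geq 0$ and cost is nonnegative, revealing more of the environment can only raise cell costs and shrink the feasible set, so the optimistic path cost never decreases as scouting proceeds; the comparison to $Q_F(P_F^\star, C_F)$ is then implicit in the limit of full exploration, where the optimistic path coincides with the optimal one. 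Your version buys rigor and self-containment --- it makes the comparison to the true optimum explicit, isolates exactly where $c = c_F^{min}$ is needed, and avoids any appeal to a limiting process --- whereas the paper's dynamic view buys the additional (and later-used) observation that the lower bound tightens monotonically over time, which underpins the claim that optimizing the optimistic path as a surrogate drives the optimality gap closed. Neither argument has a gap that the other fixes; they prove the same inequality from complementary perspectives, and arguably your statement of the containment and pointwise-underestimation lemmas would strengthen the paper's exposition.
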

\begin{proof}
    Since $c_F^{min} \geq 0$, any increase in path length cannot decrease the cost. 
    Since the optimistic map assumes all unknown space is traversable and of minimal cost, exploration of more unknown space can never decrease the cost of an unknown cell or reduce the length of the optimistic path.
\end{proof}

As both the feasible and the optimistic paths can be computed based on the partial map, we now have a notion of the worst-case optimality gap.
Therefore, we show that if an optimistic path is feasible, the optimistic path and the feasible path have equal cost, and therefore the optimality gap is tight.

\begin{corollary}[Optimal Follower Path] \label{cor:termination_optimal}
    If the optimistic path $\widetilde{P}$ is feasible, it is the optimal path.
\end{corollary}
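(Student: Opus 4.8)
The plan is to establish the corollary by a sandwich argument that pins the cost of the optimistic path exactly between the lower bound of Proposition~\ref{prop:lower_bound} and the upper bound of Proposition~\ref{prop:upper_bound}, forcing every inequality to collapse to equality with the optimal cost. Concretely, I would fix the optimistic map to be completed with the minimal cost $c_F^{min}$, so that Proposition~\ref{prop:lower_bound} applies and yields $Q_F(\widetilde{P}_F, \widetilde{M}) \leq Q_F(P_F^\star, C_F)$.

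The crucial observation is that feasibility of $\widetilde{P}_F$ means it lies entirely in the explored traversable set $\mathcal{S}^E \cap \mathcal{S}_F^+$. On this set the optimistic map $\widetilde{M}$ coincides with the partial map $M$ by construction, since the two differ only on the unknown region $\mathcal{X} \setminus \mathcal{S}^E$, which a feasible path never enters. Hence the accumulated cost is unchanged whether evaluated under $\widetilde{M}$ or $M$, giving $Q_F(\widetilde{P}_F, \widetilde{M}) = Q_F(\widetilde{P}_F, M)$. Moreover, since a feasible $\widetilde{P}_F$ now lies in the very set over which $\bar{P}_F$ is optimal by Definition~\ref{def:feasible_path}, it cannot undercut $\bar{P}_F$, so $Q_F(\bar{P}_F, M) \leq Q_F(\widetilde{P}_F, M)$.

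Chaining these relations with the two bounds yields
\begin{align}
Q_F(P_F^\star, C_F) &\leq Q_F(\bar{P}_F, M) \leq Q_F(\widetilde{P}_F, M) \nonumber\\
&= Q_F(\widetilde{P}_F, \widetilde{M}) \leq Q_F(P_F^\star, C_F), \nonumber
\end{align}
where the first inequality is Proposition~\ref{prop:upper_bound} and the last is Proposition~\ref{prop:lower_bound}. The chain therefore collapses to equality throughout, so $Q_F(\widetilde{P}_F, \widetilde{M}) = Q_F(P_F^\star, C_F)$. Since a feasible $\widetilde{P}_F$ is a valid candidate for the original problem~\eqref{eq:optimal_rover_cost}, connecting $\bm{x}_{start}$ to $\bm{x}_{goal}$ within $\mathcal{S}_F^+$, and attains the optimal cost under $C_F$, it is an optimal follower path.

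I expect the main obstacle to be the cost-agreement step rather than the inequality chaining: one must argue cleanly that $\widetilde{M}$, $M$, and $C_F$ induce identical accumulated cost along a path confined to the explored space, which rests on the optimistic map being \emph{defined} to differ from the true cost field only on unknown cells. A secondary subtlety is fixing the filler cost to $c_F^{min}$ so that Proposition~\ref{prop:lower_bound} can be invoked; this choice is harmless precisely because a feasible optimistic path never traverses the filled-in cells, so the value assigned there does not affect its cost.
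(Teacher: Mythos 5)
Your proof is correct and follows essentially the same route as the paper, which presents the corollary as an immediate consequence of Propositions~\ref{prop:upper_bound} and~\ref{prop:lower_bound}: when the optimistic path is feasible, its cost coincides with the feasible path's cost, so the lower and upper bounds meet and the optimality gap is tight. Your write-up merely makes explicit the two steps the paper leaves implicit (that $\widetilde{M}$, $M$, and $C_F$ agree on explored space, and that a feasible $\widetilde{P}_F$ cannot undercut $\bar{P}_F$), which is a faithful and rigorous elaboration rather than a different argument.
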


Ultimately, we can determine that the optimistic path is the optimal path when the condition in corollary~\ref{cor:termination_optimal} is satisfied. 
This allows us to terminate the exploration while guaranteeing that the optimistic path is the optimal path. 
In our experiments, we show that this termination criterion allows us to substantially reduce the necessary coverage required for the scout to cover. 


\section{Scouting \ac{IPP}}

In this section, we develop an \iac{IPP} method to plan the exploration of the scout. 
The core challenge is to reason about possible optimal follower paths based on only a limited partial cost field.
The key insight of our approach is to generate optimistic candidate paths to guide the scouting \ac{IPP}.
In particular, we propose a two-stage approach of i) prioritizing whether a feasible \emph{follower} path exists, and ii) discovering and establishing the optimal path. 
An overview of the proposed algorithm can be found in Alg.~\ref{alg:scout_ipp} and Fig.~\ref{fig:paper_overview}. 

\subsection{Informative Path Planning}
Our planner is based on a sampling-based \ac{IPP} approach from~\cite{Schmid20ActivePlanning}. 
The \ac{IPP} incrementally builds a tree by sampling candidate viewpoints. 
An \emph{information gain} $g$ is assigned to every node and a \emph{cost} $\gamma$ (denoted $c$ in \cite{Schmid20ActivePlanning}) to every edge in the tree.
The algorithm then computes a path as a sequence of nodes in the tree that maximizes the \emph{value} $v$, defined as accumulated gain divided by the accumulated cost for any trajectory.
To account for the new information acquired by the robot, only the first segment of that path is executed and the gains $g$, costs $\gamma$, and values $v$ are updated.

To optimize for exploration of a follower path, we make the following additions to the algorithm.
First, we define the information gain $g$ of a node $n$ as:

\begin{equation} \label{eq:information_gain}
    g(n) = \int_{\mathcal{S}^E_n } \mathbb{I}\left[\bm{x} \in \widetilde{P} \wedge \bm{x} \notin \mathcal{S}^E\right] d\bm{x}
\end{equation}
where $\mathcal{S}^E_n$ denotes the visible area from node $n$, and $\mathbb{I}$ is the indicator function which is $1$ for cells that are on the optimistic path $\widetilde{P}$ and are unexplored, and otherwise zero.
The cost $\gamma$ can conveniently be chosen as:
\begin{equation} \label{eq:ipp_cost}
    \gamma=\int_e C_S(\bm{x})d\bm{x}
\end{equation}
representing the scout travel cost \eqref{eq:problem_scout} along an edge $e$.

Importantly, a new optimistic path $\widetilde{P}$ is computed based on the current map $M$ during every update step, allowing accurate updates of $g$, $\gamma$, and $v$.
Observing that only unobserved cells in $M$ can contribute to the information gain, once all cells in view of a node $\mathcal{S}_n^E$ are explored, the node is added to a \emph{closed} set with $g=0$ and requires no further updates.

\subsection{Switching Exploration Modes}
To minimize scouting cost accumulated if the problem is infeasible, we first focus on discovering the feasibility of the problem and then optimality of the found path.

Following Prop.~\ref{prop:problem_feasible}, it is sufficient to find any feasible path.
Therefore, we complete the optimistic map $\widetilde{M}$ with the worst-case cost $c=c_F^{max}$.
This encourages $\widetilde{P}$ to stay within the already explored space $\mathcal{S}^E$ as much as possible, since traversing unknown areas incurs high cost.
Consequently, the scouting \ac{IPP} is encouraged to explore the geometrically shortest path connecting the currently explored and thus feasible parts of $\widetilde{P}$ to $\bm{x}_{goal}$.

Once the feasibility of the problem has been established, meaning that we have found $\bar{P}_F$, we switch to exploring the \emph{optimal} path.
To this end, the optimistic map is completed with the minimal cost $c = c_F^{min}$.
As shown in Prop.~\ref{prop:lower_bound}, the resulting $\widetilde{P}$ reflects a lower bound on the optimal path cost.
Therefore, the scouting \ac{IPP} aims to observe as much of $\widetilde{P}$ as quickly as possible.
As a result, observing the lower bound follower path as a surrogate objective leads to increasingly higher cost lower bounds until the optimality gap is tight.

The scout \ac{IPP} terminates if either of the two conditions proposed in Cor.~\ref{cor:termination_optimal} and Prop.~\ref{prop:termination_infeasible} are satisfied. 
If Cor.~\ref{cor:termination_optimal} is satisfied, we have found the optimal \emph{follower} path and for Prop.~\ref{prop:termination_infeasible}, there exists no feasible path in the problem and therefore further exploring the environment does not make sense.

\begin{algorithm}[t]
\caption{Scouting \ac{IPP}}\label{alg:scout_ipp}
\begin{algorithmic}
\STATE $i\gets 0$
\STATE $\bm{x}_0 \gets \bm{x}_{start}$
\STATE $M_0 \gets M(\bm{x}_0)$
\STATE $c \gets c_F^{max}$
\WHILE{true}
    \STATE $\widetilde{M} \gets \text{complete}(M,c)$
    \STATE $\widetilde{P} \gets \hat{P}(\widetilde{M})$
    \IF{$!\widetilde{P}$}
        \STATE TERMINATE(infeasible) \COMMENT{Prop.\ref{prop:termination_infeasible}}
    \ENDIF
    \IF{$\bm{x} \in M \ \forall \bm{x} \in \widetilde{P}$}
        \IF{$c = c_F^{max}$}
            \STATE $c \gets c_F^{min}$ \COMMENT{Switch exploration mode}
        \ELSE
            \STATE TERMINATE(optimal) \COMMENT{Cor.\ref{cor:termination_optimal}}
        \ENDIF
    \ENDIF
    \STATE $\bm{u} \gets \argmax{n} \ v(n)$ \COMMENT{Sampling-based IPP}
    \STATE $\bm{x}_{i+1} \gets f(\bm{x}_i, \bm{u})$
    \STATE $M = M \bigcup M(x_{i+1})$
    \STATE $i = i + 1$
\ENDWHILE
\end{algorithmic}
\end{algorithm}

\subsection{Completeness}
Lastly, combining the results from \refsec{sec:follower_path_planning} with the scout \ac{IPP} proposed in this section, we show that the proposed approach is complete, meaning that the optimal path will eventually be found if one exists.

\begin{proposition}[Scout \ac{IPP} Completeness] \label{prop:path_completeness}
    The scout will find the optimal \emph{follower} path if it exists.
\end{proposition}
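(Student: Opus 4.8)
The plan is to assume that an optimal follower path $P_F^\star$ exists and to show that Alg.~\ref{alg:scout_ipp} halts via the \texttt{TERMINATE(optimal)} branch while returning $P_F^\star$. I would organize the argument in three stages mirroring the control flow of the algorithm: (i) ruling out a spurious infeasibility termination, (ii) showing that the feasibility phase ($c = c_F^{max}$) reaches the mode switch in finite time, and (iii) showing that the optimality phase ($c = c_F^{min}$) closes the optimality gap in finite time. Stage (i) is immediate: by the contrapositive of Prop.~\ref{prop:termination_infeasible}, the existence of $P_F^\star$ guarantees that an optimistic path $\widetilde{P}$ always exists, so the \texttt{TERMINATE(infeasible)} branch is never entered.

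For stage (ii), I would exploit the fact that the follower planner runs A$^\star$ on a finite $8$-connected grid (Assumption~\ref{ass:rover_planner}), so there are only finitely many cells to explore, and the explored set $\mathcal{S}^E_k$ is monotonically non-decreasing in $k$, as already used in the proof of Prop.~\ref{prop:problem_feasible}. The key lemma to establish is that, whenever the current optimistic path $\widetilde{P}$ still contains an unexplored cell, the information gain $g$ in \eqref{eq:information_gain} is strictly positive for at least one reachable viewpoint, so a value-maximizing IPP step enlarges $\mathcal{S}^E$. Combined with monotonicity and finiteness, this forces $\mathcal{S}^E_k$ to grow until no unexplored cell remains on $\widetilde{P}$; at that point the guard $\bm{x}\in M\ \forall \bm{x}\in\widetilde{P}$ holds, the fully-explored $\widetilde{P}$ coincides with $\bar{P}_F$ (it is admissible in and optimal over the smaller feasible set), and the algorithm switches to $c = c_F^{min}$.

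Stage (iii) reuses the same exploration argument with the map now completed by $c_F^{min}$. Here the monotone quantities to track are the lower bound $Q_F(\widetilde{P},\widetilde{M})$ of Prop.~\ref{prop:lower_bound}, non-decreasing under exploration, and the upper bound $Q_F(\bar{P}_F, M)$ of Prop.~\ref{prop:upper_bound}, non-increasing under exploration. Driving exploration along $\widetilde{P}$ again empties its unexplored cells in finite time, at which point $\widetilde{P}$ is feasible; by Cor.~\ref{cor:termination_optimal} it is then optimal, the two bounds coincide, and \texttt{TERMINATE(optimal)} returns $P_F^\star$.

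The main obstacle is the moving-target nature of $\widetilde{P}$: since a fresh optimistic path is recomputed from the updated map at every iteration, revealing a cell on the current $\widetilde{P}$ does not, by itself, reduce the unexplored count of the next iteration's $\widetilde{P}$. The clean way around this is to avoid reasoning about any single path and instead argue directly on $\mathcal{S}^E_k$: being monotone and bounded by the finite grid, it can increase only finitely often, so the exploration step — the only branch enlarging $\mathcal{S}^E$ — can execute only finitely many times before the guard must trigger. I would therefore need to verify carefully that \emph{every} IPP step taken while $\widetilde{P}$ has an unexplored cell genuinely enlarges $\mathcal{S}^E$, i.e. that a positive-gain reachable viewpoint always exists (using $\mathcal{S}_F^+\subseteq\mathcal{S}_S^+$ so the scout can observe any candidate follower cell) and that executing the first planned segment actually observes such a cell; this is precisely the condition that rules out the scout stalling indefinitely.
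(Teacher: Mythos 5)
Your proposal is correct and follows essentially the same route as the paper's own proof: finiteness of the map cells, monotone growth of the explored space, the observation that the gain in \eqref{eq:information_gain} is positive only for views of unexplored cells on $\widetilde{P}$, and conclusion via Cor.~\ref{cor:termination_optimal} together with the reachability condition $\mathcal{S}^+_F \subseteq \mathcal{S}^+_S$ (falling back on Prop.~\ref{prop:termination_infeasible} otherwise). Your version is in fact more careful than the paper's terse argument, notably in your explicit resolution of the moving-target issue for the recomputed optimistic path (arguing on the monotone, finitely-growing $\mathcal{S}^E_k$ rather than on any single path) and in tracking the two-phase mode switch of Alg.~\ref{alg:scout_ipp}, both of which the paper's proof glosses over.
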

\begin{proof}
    The scouting \ac{IPP} only adds nodes $n$ to the tree at every iteration. 
    Since the number of map cells is finite, the tree will eventually cover all reachable cells in the map.
    As long as there exists an unknown cell in the optimistic path of the current map, 
    since only nodes $n$ that observe unknown cells on the optimistic path have a gain $g>0$, the scouting \ac{IPP} will explore these cells.
    Therefore, the optimistic path must eventually become feasible, and thus the optimal path is found Cor.~\ref{cor:termination_optimal}.
    Since $\mathcal{S}^+_F \subseteq \mathcal{S}^+_S$, the optimal follower past must be reachable by the scout or infeasibility of the problem can be established Prop.~\ref{prop:termination_infeasible}
\end{proof}

While Prop.~\ref{prop:path_completeness} implies that in the worst case, the map must be densely sampled and exhaustively explored to guarantee optimality, we note that in practice optimality is typically shown much earlier and sparse search trees suffice.

\section{Experiments}
\label{sec:evaluation}

\subsection{Experimental Setup}
\label{sec:exp_setup}

We evaluate our approach in a planetary exploration scenario, where an MAV scout is deployed to explore an unknown environment in order to find the lowest cost path for a ground rover. 
An overview of all components is shown in Fig.~\ref{fig:system_overview}.
Without loss of generality, we assume that the aerial scout and ground rover are controlled over a two-dimensional space $\mathcal{X}\subset\mathbb{R}^2$ and represent the environment as a fixed-resolution grid map $M \in \mathcal{C}^{640 \times 480}$, using $\SI{0.5}{m}$ resolution. 
For teamed exploration, we assume that the \emph{scout} and \emph{follower} start from the same position $\bm{x}_{start}$.

\parsec{Environment}
We use OAISYS \cite{muller2022interactive, dlr145997} to simulate high-fidelity planetary environments.
The environment used spans an area of $\SI{320}{}\times\SI{240}{m}$, consisting of three different types of soils, corresponding to different rover traversal costs $C_F$. Additionally, the environment contains obstacles of various sizes.
Scenes are inspired by real Mars environments in terms of obstacle distribution and number of soil categories~\cite{krautClassificationMarsTerrain2010, swanAI4MARSDatasetTerrainAware2021}.
Example scenes are shown in Fig.~\ref{fig:maps}.
We use a constant cost for the scout $C_S(\bm{x}) = 1$, thus minimizing exploration time.

\parsec{Robot Parameters}
We use a 2D constant acceleration motion model with a maximum velocity of $v_{max}=\SI{10}{m/s}$.
The aerial scout is equipped with a downward-facing camera with a 90$^\circ$ field of view (FoV) and at a constant flying height of $\SI{20}{m}$, thus covering $\approx \SI{40}{} \times \SI{40}{m}$.
Since the main focus of our work is on planning, we assume that accurate state estimates and soil-type measurements are provided by the camera.

\parsec{Metrics}
Since the planner is stochastic, every experiment is conducted 10 times and the mean and standard deviations are reported.
To assess the performance of the planner, we compute the average time \tfirst until a \emph{feasible} path $\bar{P}_F$ is found, average time \topt until the \emph{optimal} path $P_F^\star$ is found, and the average time \tterm until exploration is terminated.
Note that often the optimal path is discovered before the scout can determine that it is optimal, thus \topt$\leq$ \tterm.

\parsec{Baselines}
Since, to the best of our knowledge, there are no methods that address our problem exactly, we compare our \emph{Path-aware} planner against the most closely related baselines.
We compare against a pure \emph{Exploration} planner \cite{Schmid20ActivePlanning}, and a \emph{Goal-aware} planner \cite{oleynikovaSafeLocalExploration2018} aiming to explore a path towards a known goal point as fast as possible.
To also consider the observed follower cost, we take inspiration from the ideas of \cite{delmericoActiveAutonomousAerial2017, zhangFastActiveAerial2022} and create two additional baselines for our problem.
A \emph{Frontier-cost} planner computes frontiers, where each frontier cell has an information gain inverse to the cost of the adjacent observed space.
In addition, a \emph{Cost-aware} planner inversely weighs the information gain of each view by the average cost of the known voxels in the view, thus encouraging broader exploration adjacent to low-cost areas.

\parsec{Hardware} All experiments are conducted on an Intel i7-9750H laptop CPU @\SI{2.6}{\giga\hertz}, which can be equipped on mobile aerial robots. We note that a maximum of 2 cores were used by our planner. 

\begin{figure}
    \centering
    \includegraphics[width=\linewidth]{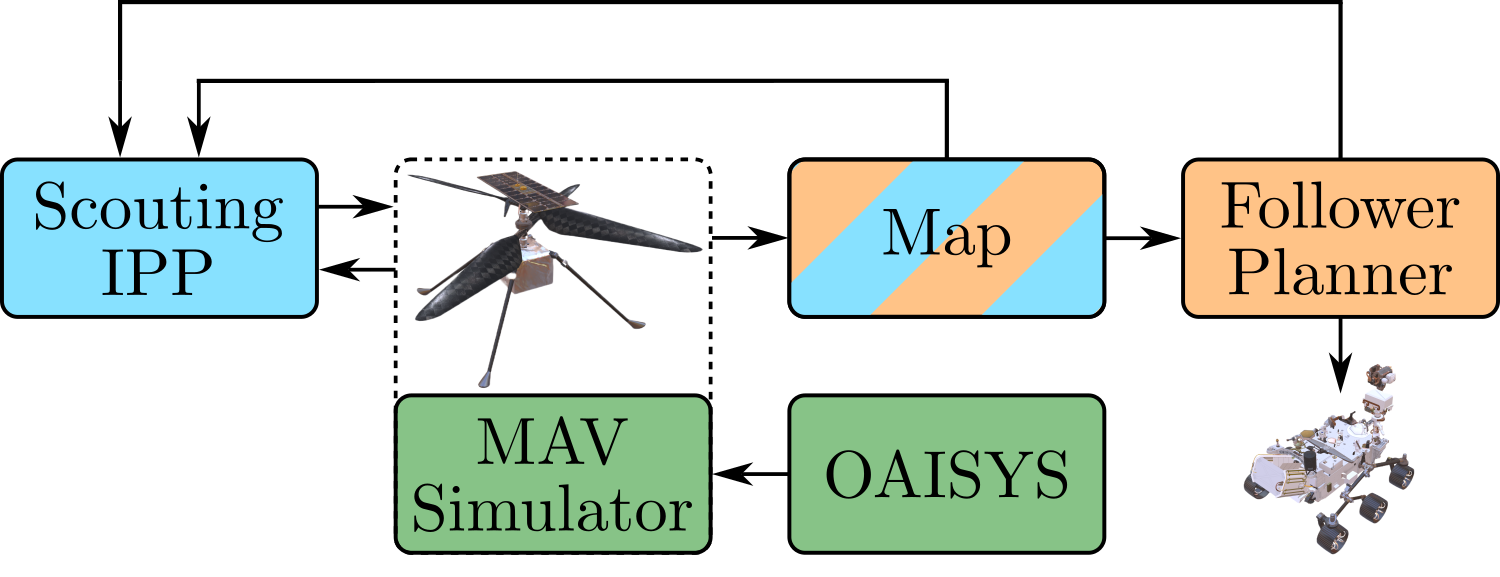}
    \caption{Overview of the proposed approach. The scout adds observations to the map. The follower planner computes feasible and optimistic paths on the given map. These paths are used to guide future exploration by the scout. Once the optimal follower path has been established, the scout terminates and the path can be executed. (Images: NASA/JPL)}
    \label{fig:system_overview}
    \vspace{-10pt}
\end{figure}

\begin{figure}
\centering
    \includegraphics[width=0.325\columnwidth]{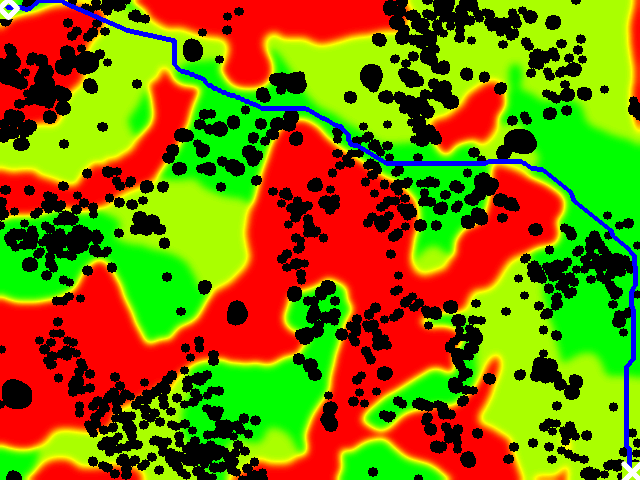}
    \hfill
    \includegraphics[width=0.325\columnwidth]{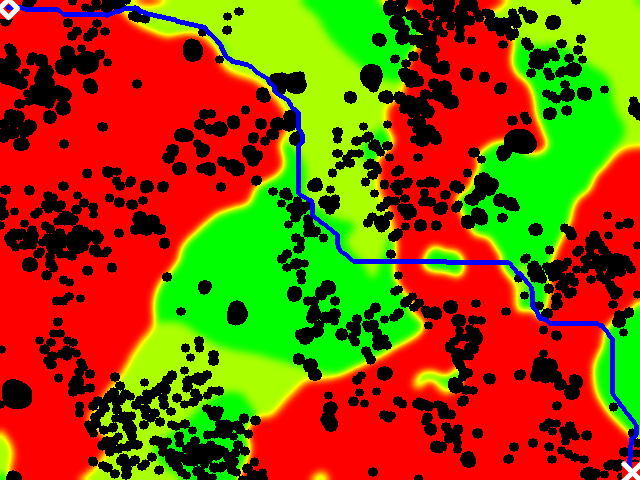}
    \hfill
    \includegraphics[width=0.325\columnwidth]{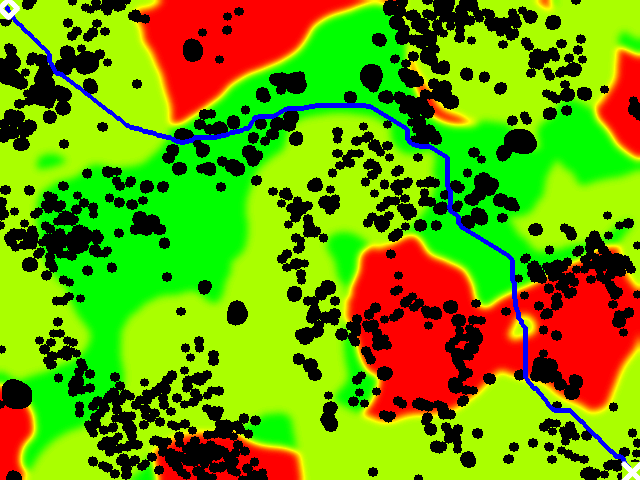}
  \caption{Different environments used for evaluation. Variable follower cost fields are shown in color from red (high) to green (low). The optimal collision-free path is shown in blue. }\label{fig:maps}
      \vspace{-15pt}
\end{figure}

\begin{figure*}
    \centering
  \begin{subfigure}[b]{0.32\linewidth}
    \includegraphics[width=\linewidth]{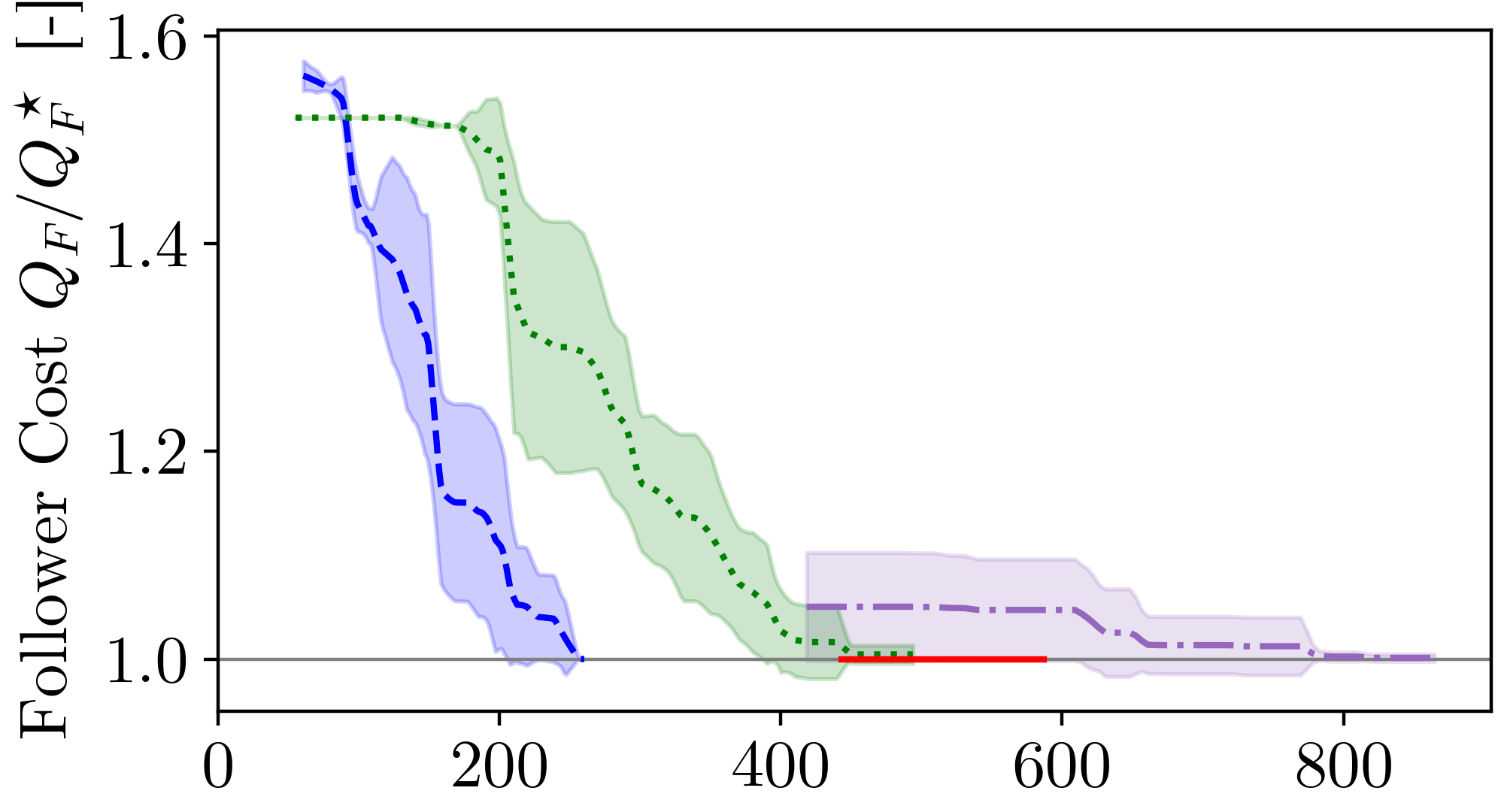}
    \label{fig:cost_112}
     \vspace*{-1.2em}
  \end{subfigure}
  \hfill
  \begin{subfigure}[b]{0.32\linewidth}
    \includegraphics[width=\linewidth]{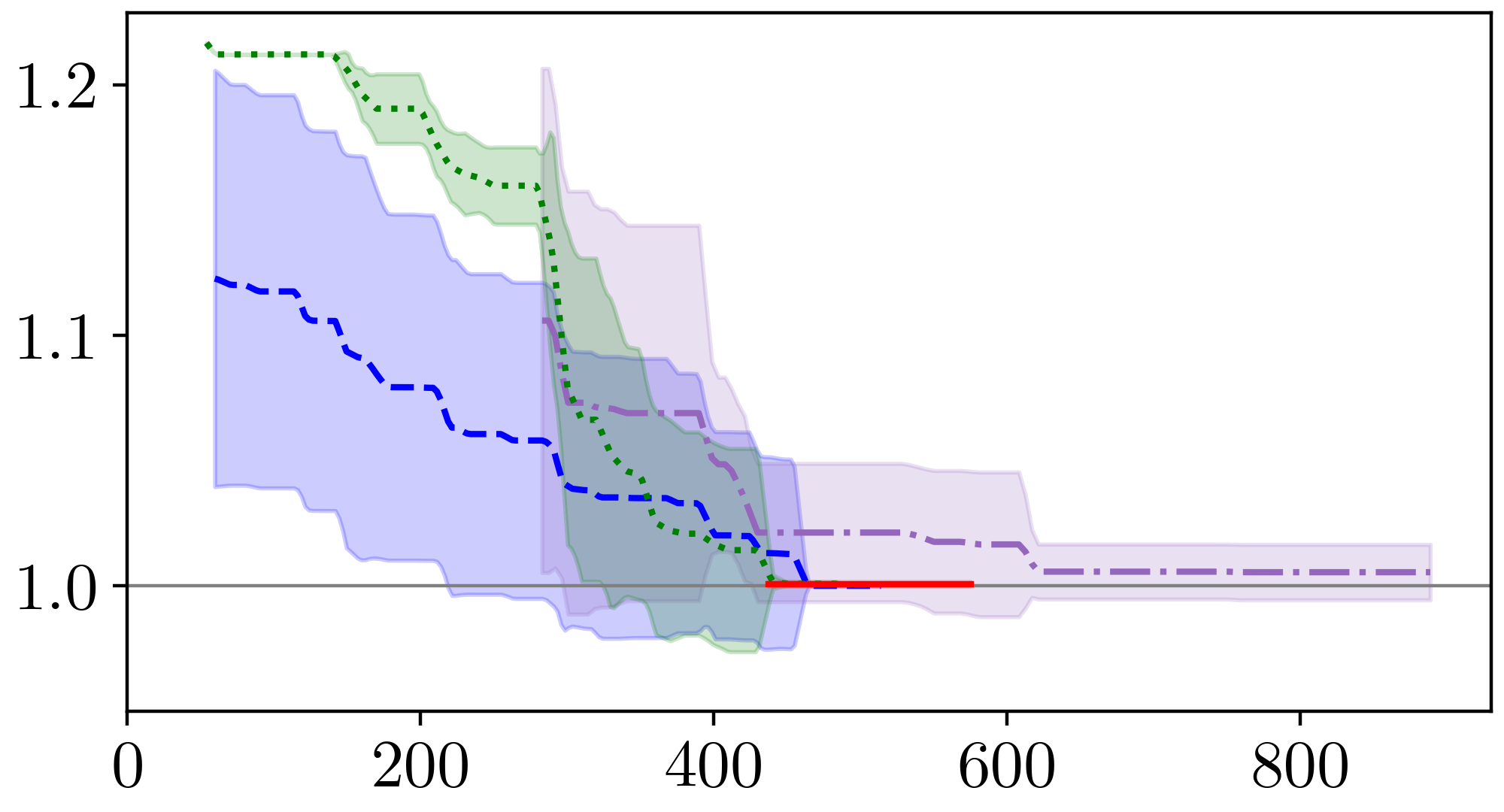}
    \label{fig:cost_212}
    \vspace*{-1.2em}
  \end{subfigure}
  \hfill
  \begin{subfigure}[b]{0.32\linewidth}
    \includegraphics[width=\linewidth]{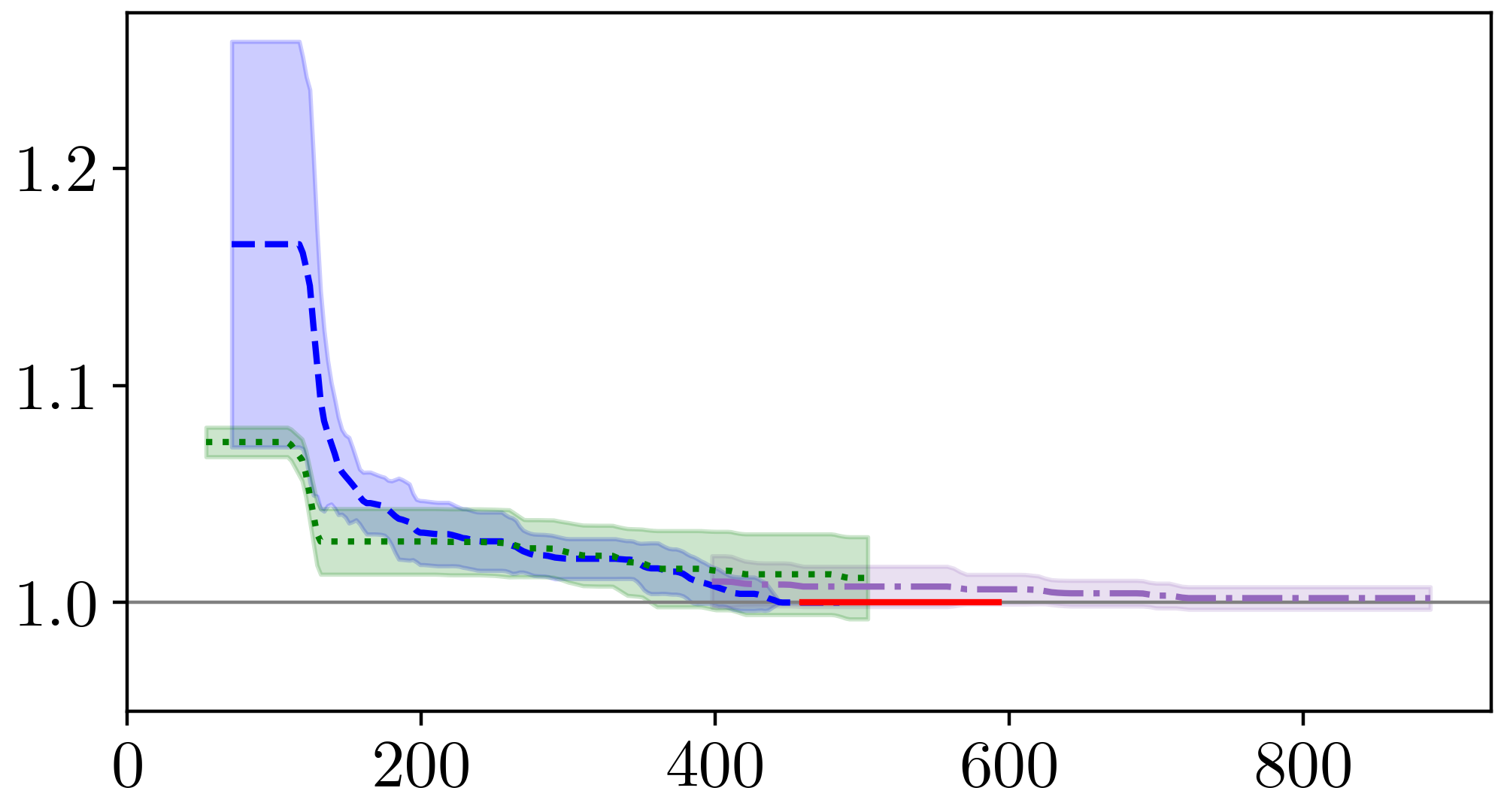}
    \label{fig:cost_312}
    \vspace*{-1.2em}
  \end{subfigure}
  \begin{subfigure}[b]{0.32\linewidth}
    \includegraphics[width=\linewidth]{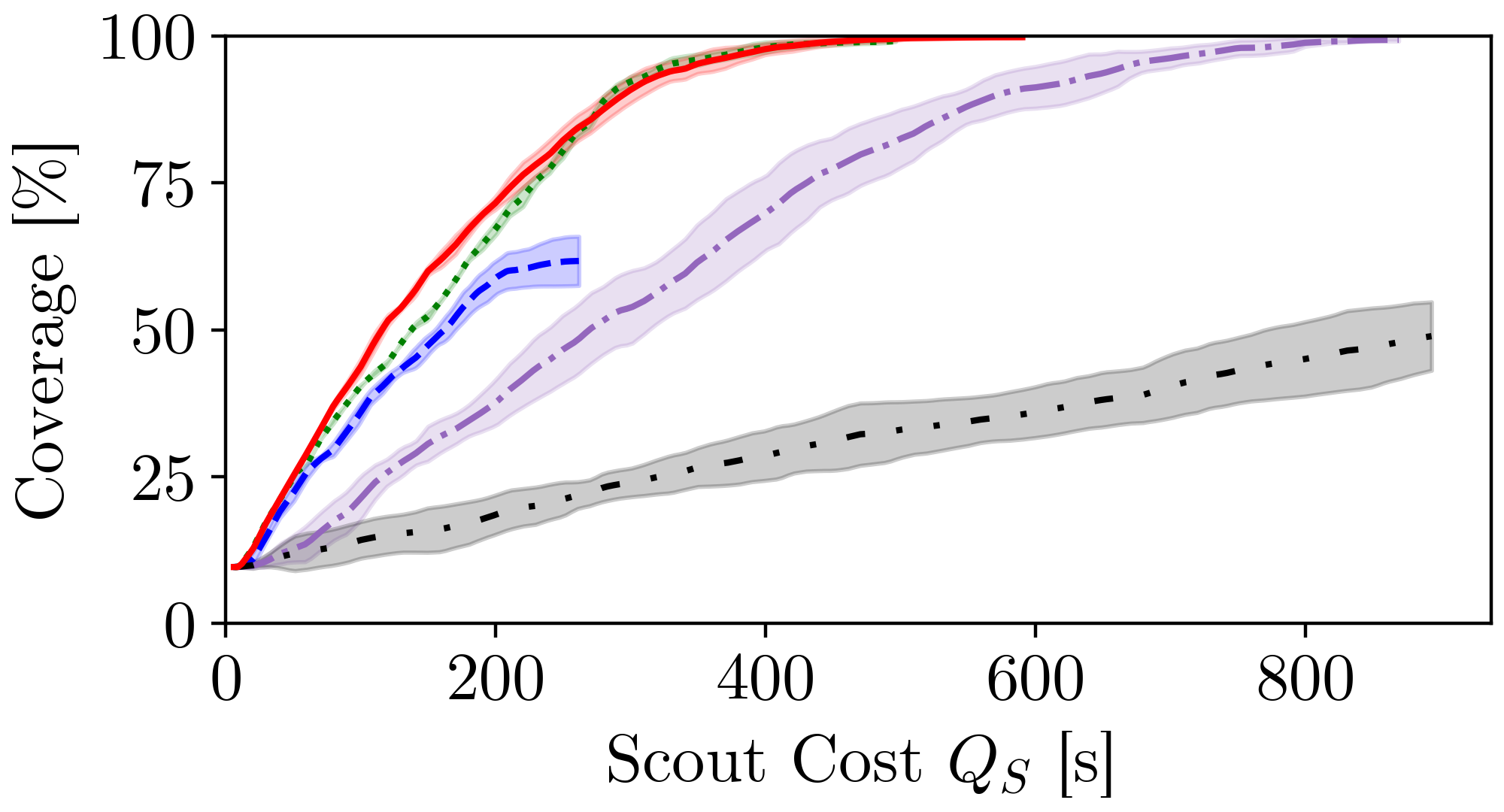}
    \label{fig:coverage_112}
    \vspace*{-1.0em}
  \end{subfigure}
  \hfill
  \begin{subfigure}[b]{0.32\linewidth}
    \includegraphics[width=\linewidth]{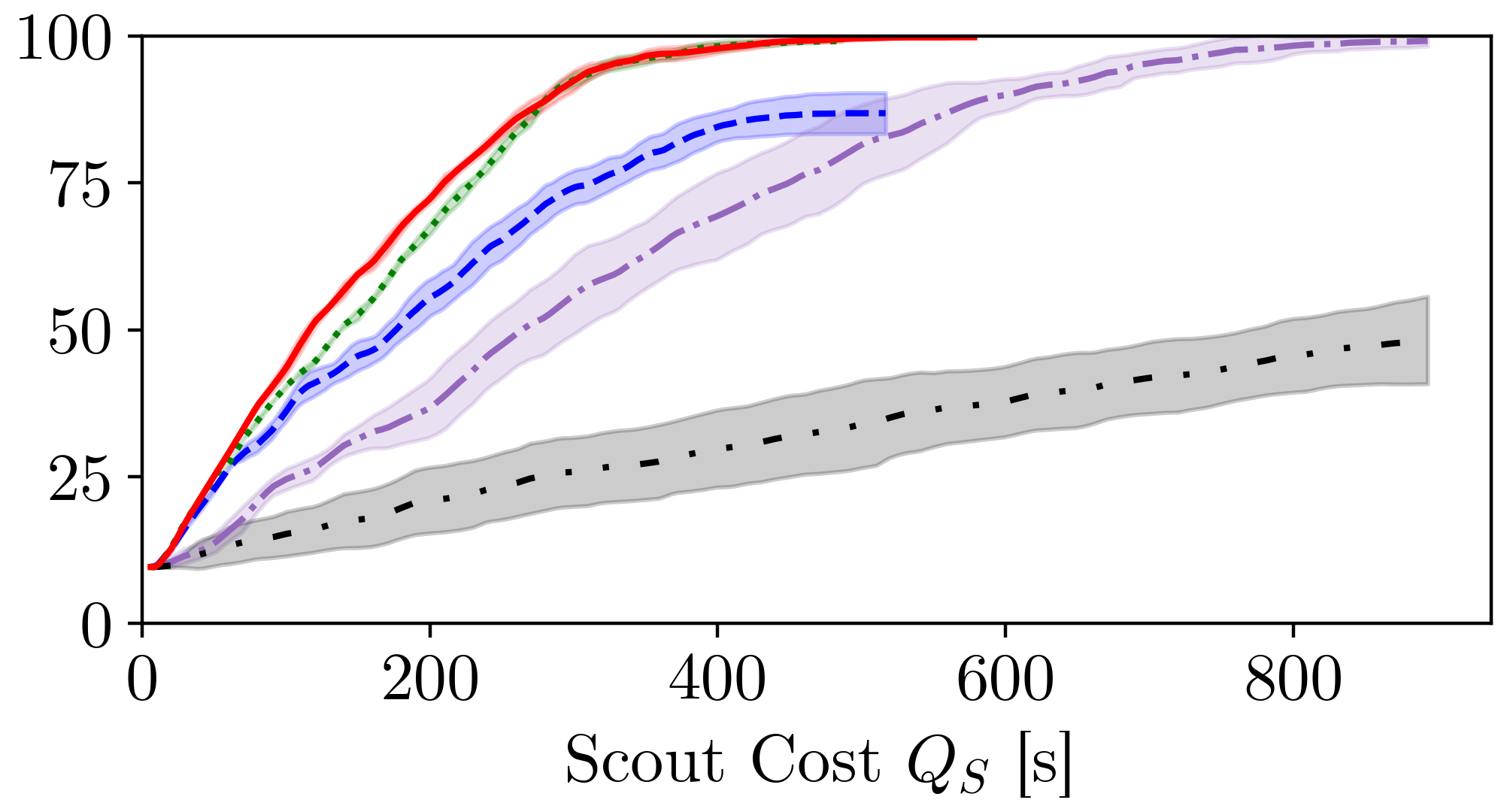}
    \label{fig:coverage_212}
    \vspace*{-1.0em}
  \end{subfigure}
  \hfill
  \begin{subfigure}[b]{0.32\linewidth}
    \includegraphics[width=\linewidth]{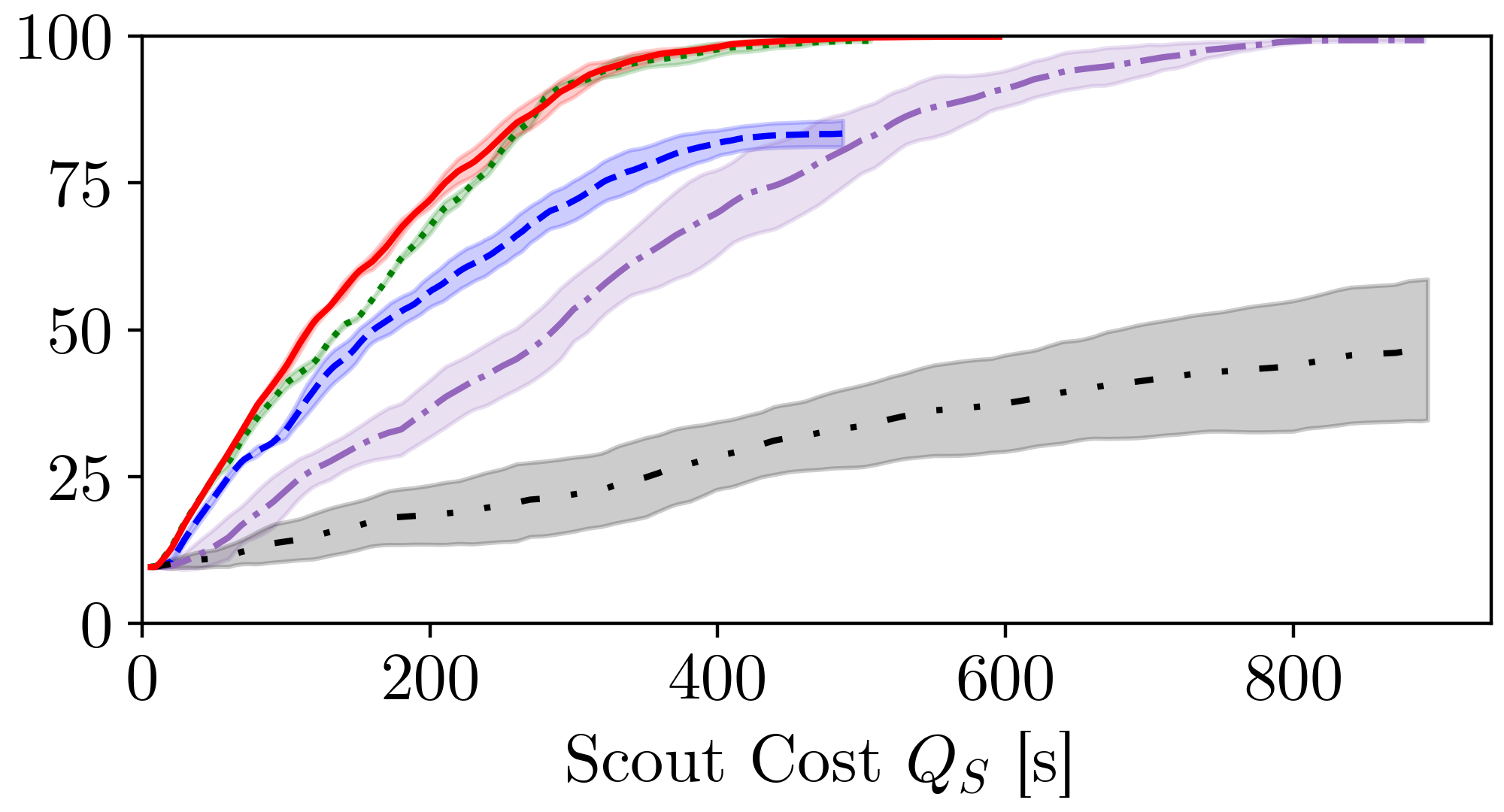}
    \label{fig:coverage_312}
    \vspace*{-1.0em}
  \end{subfigure}
    \begin{subfigure}[b]{0.7\linewidth}
    \includegraphics[width=\linewidth]{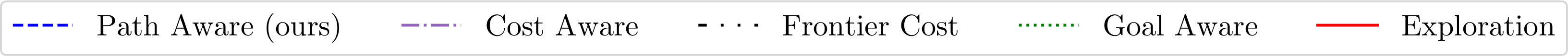}
    \label{fig:legend}
    \vspace*{-1.2em}
  \end{subfigure}
    \caption{Normalized follower path cost $Q_F(t)/Q_F^\star$ and scene coverage after $Q_S(t) = t$ seconds of scouting for different methods. }
    \label{fig:general_comparison}
    \vspace{-10pt}
\end{figure*}

\subsection{Scouting for Optimal Follower Paths}
\label{sec:evaluation_comparison}

We first evaluate the ability of each method to efficiently scout for optimal follower paths in three different scenes. 
Fig.~\ref{fig:general_comparison} shows the current follower path cost $Q_F(t)/Q_F^\star$ after $Q_S(t) = t$ seconds of scouting.
Note that the follower cost $Q_F(t)$ is normalized to the optimal path $Q_F^\star$ in that environment, thus a value of $1$ indicates the optimal path has been found.
We further show the \emph{coverage}, \ie the percentage of the scene that was explored.
Each column (left to right) corresponds to an environment from Fig.~\ref{fig:maps}.
Finally, the performance metrics are summarized in Tab.~\ref{tab:metrics}

We observe in all scenarios that our \emph{path-aware} and the \emph{goal-aware} methods find a feasible path the earliest, as also indicated in the notably lower \tfirst.
Thus, in particular when the scout travel budget is low, knowledge of the goal location is essential to identify an initial path quickly.
However, once a feasible path is established, our approach optimizing the lower bound of $Q_F$ is able to drive down $Q_F$ significantly faster than the goal-aware planner, indicated by a $22.5\%$ decrease in \topt on average.

We note that for our path-aware planner, once the optimal path is found, only little more time needs to be spent before the planner can establish that the opimality gap is tight and the path is indeed optimal.
This allows our approach to terminate $17\%$ quicker than the fastest coverage-based approach.
Interestingly, we find that it is sufficient for our method to explore only $77.0\%$ of the scene on average in order to establish optimality, saving valuable scout flight time. 
It is worth pointing out that the map space is allocated to fit the start and goal positions. 
The required coverage could thus be markedly lower if even larger spaces are considered.

As expected, the pure \emph{exploration} approach achieves the quickest coverage rate.
However, without being aware of the follower robot plan, the goal location is often only explored at a later stage.
On the other hand, the scene is typically sufficiently explored for the first path to be already close to the optimal one, as indicated by a flat line between $\sim\SI{400}{}-\SI{600}{s}$ in Fig.~\ref{fig:general_comparison}.
Nonetheless, the remainder of the scene has to be explored to establish optimality.
These observations highlight that the quickest coverage rate is not necessarily the best strategy, as our approach with slower scene coverage but focusing on relevant areas achieves better performance.

Interestingly, the two cost-aware baselines don't perform very well.
Similar to ours, incorporating the cost for view planning naturally slows down overall exploration progress.
However, in the \emph{cost-aware} and \emph{frontier-cost} planners it does not lead to a quicker exploration of a feasible or the optimal path, such that cost-unaware but rapid exploration appears to be a superior strategy.
This highlights the challenging nature of our problem and that it is not trivial to incorporate traversal cost information into a successful scout-follower-IPP system.
In contrast, optimizing the lower bound of the cost over the complete follower trajectory appears to be a more appropriate information signal, leading to improved performance over cost-unaware methods.

\setlength{\tabcolsep}{2pt} 
\begin{table}
\centering
\resizebox{\columnwidth}{!}{%
\begin{tabular}{c|ccccc}
\toprule
Metric & Path Aware (ours) &     Cost Aware &  Frontier Cost &    Goal Aware &   Exploration \\
\midrule
\tfirst [s]     &       $81 \pm 24$ &  $597 \pm 135$ &  N/A &    $54 \pm 0$ &  $514 \pm 38$ \\
\topt [s] &      $293 \pm 80$ &  $656 \pm 101$ &  N/A &  $378 \pm 41$ &  $514 \pm 38$ \\
\tterm [s]   &      $376 \pm 30$ &   $785 \pm 56$ &  N/A &  $454 \pm 35$ &  $522 \pm 46$ \\
\bottomrule
\end{tabular}
}
\caption{Performance metrics for different methods [s].}
\label{tab:metrics}
\vspace{-5pt}
\end{table}

\begin{figure}
    \centering
    \includegraphics[width=0.95\linewidth]{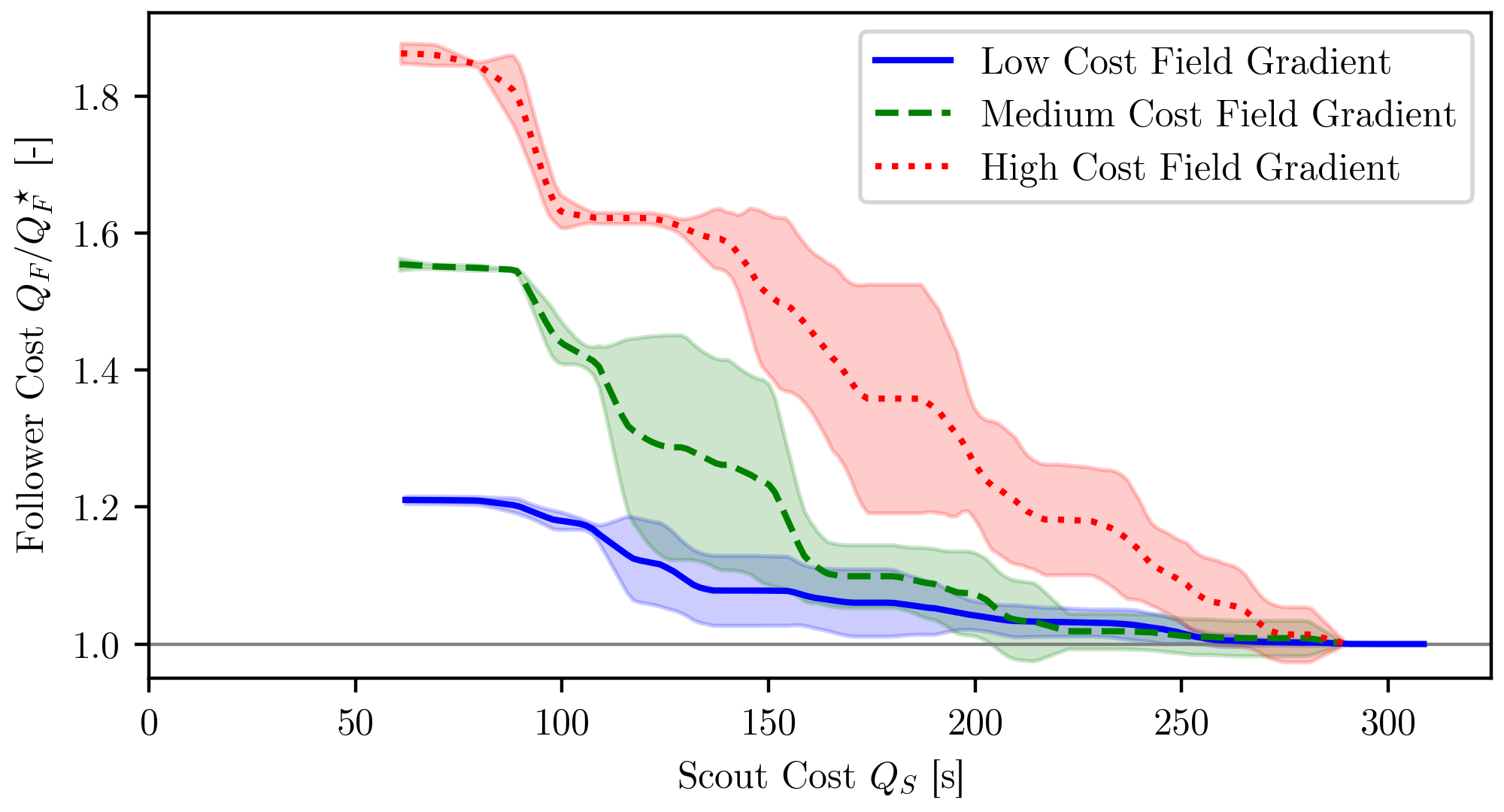}
    \caption{Scouting performance for varying cost gradients in a scene.}
    \label{fig:cost_comp}
    \vspace{-10pt}
\end{figure}


\subsection{Influence of the Follower Cost Distribution}
\label{sec:evaluation_cost}

Since follower cost can vary notably for different environments and robots, we evaluate our approach for an identical distribution of soil types, but with different \emph{cost gradients}.
We use an exponential set of costs, where $C_F^{max}=2\times$, $4\times$, and $8\times C_F^{min}$ for \emph{low}, \emph{medium}, and \emph{high} cost gradients, respectively.
Intuitively, the gradient expresses how "bad" undesirable regions of the scene are, where a high gradient encourages longer paths that avoid bad areas over more direct paths through them.
We observe in Fig.~\ref{fig:cost_comp} that for higher gradients the initially found paths are naturally more sub-optimal.
Nonetheless, our approach is able to refine the path quickly in all situations.
Notably, the optimal path is discovered in very similar time, and optimality is established and planning terminated shortly after.
This suggests that the presented formalism is robust with respect to different cost gradients.


\subsection{Influence of Obstacle Density}\label{sec:evaluation_obstacles}

We further study the importance follower obstacles for successful scouting in Fig.~\ref{fig:obst_comp} by considering an increasing amount of obstacles, up to $32.6\%$ of the scene, illustrated in Fig.~\ref{fig:obstacle_densities}.
We find that unless there are many obstacles, the impact on planning performance is minimal.
In the high density case, the optimal path is mainly determined by traversability rather than cost, leading to an increase of $\bar{\tau}_1$ from 65s for medium density to 111s. 
The same holds also for $\bar{\tau}^*$. 
The relative initial follower cost on the other hand is lower for the high density case, since the difference between most feasible paths and the optimal path is lower. 


\begin{figure}
    \centering\includegraphics[width=0.95\linewidth]{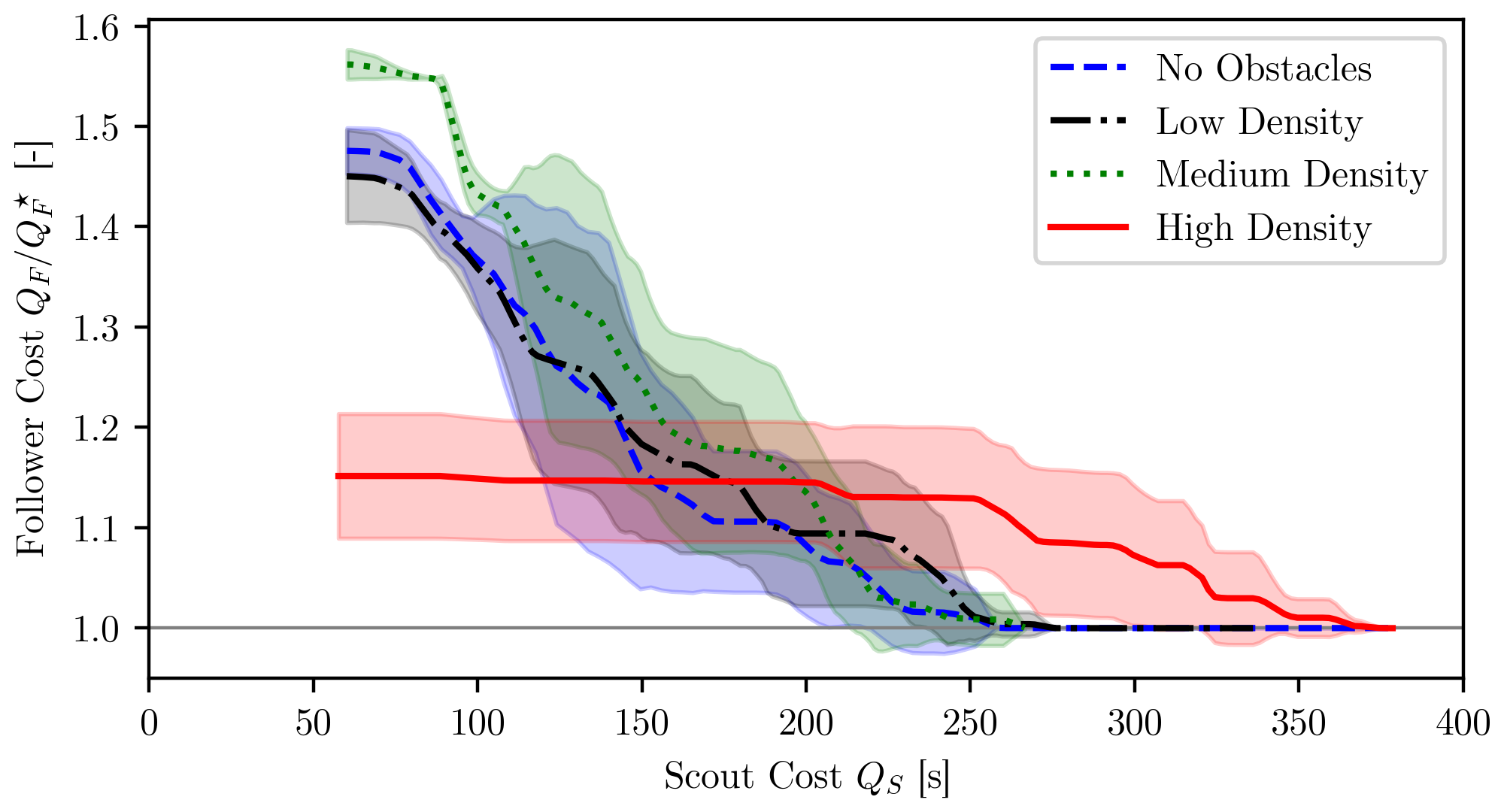}
    \caption{Scouting performance for variable obstacle densities.}
    \label{fig:obst_comp}
    \vspace{-5pt}
\end{figure}

\begin{figure}
    \includegraphics[width=0.24\columnwidth]{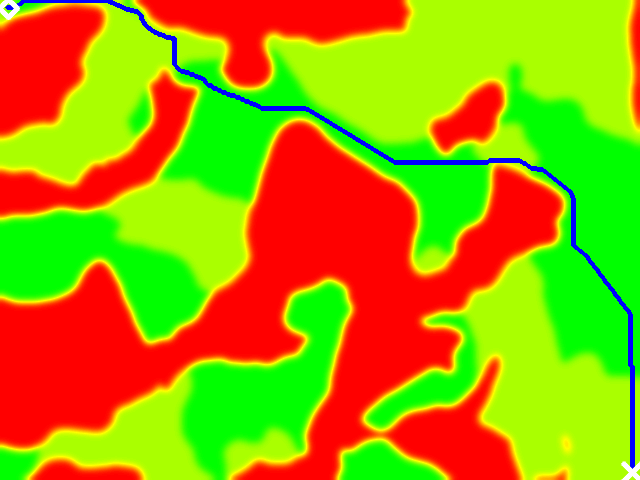}
    \includegraphics[width=0.24\columnwidth]{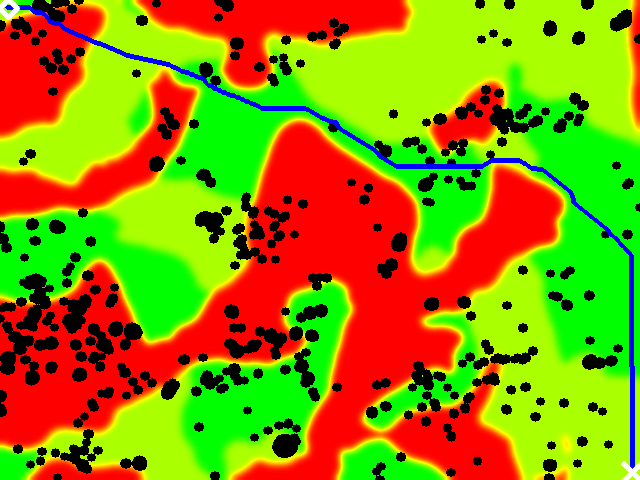}
    \hfill
    \includegraphics[width=0.24\columnwidth]{images/map_112_vis.png}
    \hfill
    \includegraphics[width=0.24\columnwidth]{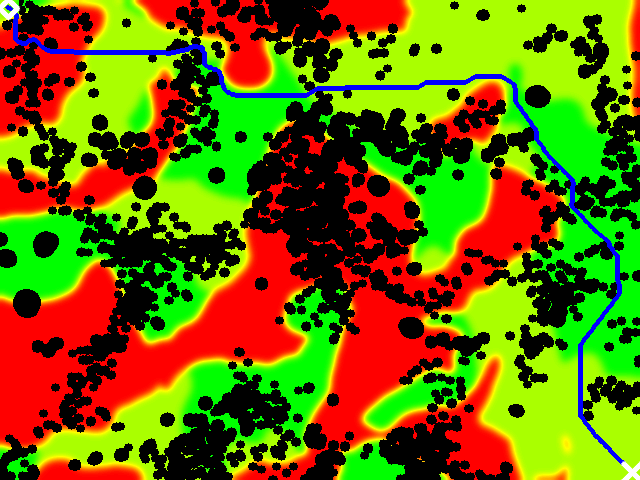}
    \caption{Environment with varying obstacle distributions and densities, covering from 0\% (left) to 32.6\% (right) of the scene.}
    \label{fig:obstacle_densities}
    \vspace{-10pt}
\end{figure}

\subsection{Generalizing to Different Follower Planners}
\label{sec:evaluation_prm-rrt}

Our proposed IPP method is agnostic to the follower planner formulation, assuming it is complete and optimal as described in \refsec{sec:follower_path_planning}. 
We evaluate the impact of relaxing this assumption by comparing the performance of scouting for our deterministic $A^*$ planner and the popular sampling-based $RRT^*$ planner \cite{karamanSamplingbasedAlgorithmsOptimal2011}, that is only \emph{asymptotically} complete and optimal.
The results in Fig.~\ref{fig:prm_vs_rrt} demonstrate that, although optimality and completeness of our scouting IPP can not be strictly guaranteed, very similar performance is achieved in practice.
We note that the variance for $RRT^*$ is slightly higher, which can be explained by the sampling-based nature of the planner. 
Nonetheless, the fact that \tfirst, \topt and \tterm are almost identical suggests that our method can be paired with a broad range of follower planners.

\subsection{Analysis of Stopping Criteria}
\label{sec:evaluation_qualitative}
To illustrate the ability of our approach to establish the globally optimal path even in challenging environments and terminating early if no feasible path exists, we consider the synthetic environments shown in Fig.~\ref{fig:artificial_maps}. 
We observe that for the open box scene, our approach only needs to explore$\approx16\%$ of the free space to guarantee that the path is optimal. 
Similarly, in the closed box case our planner explores only $18\%$ of the scene, allowing it to establish that the problem is infeasible and terminating early.


\begin{figure}
    \centering
    \includegraphics[width=0.95\linewidth]{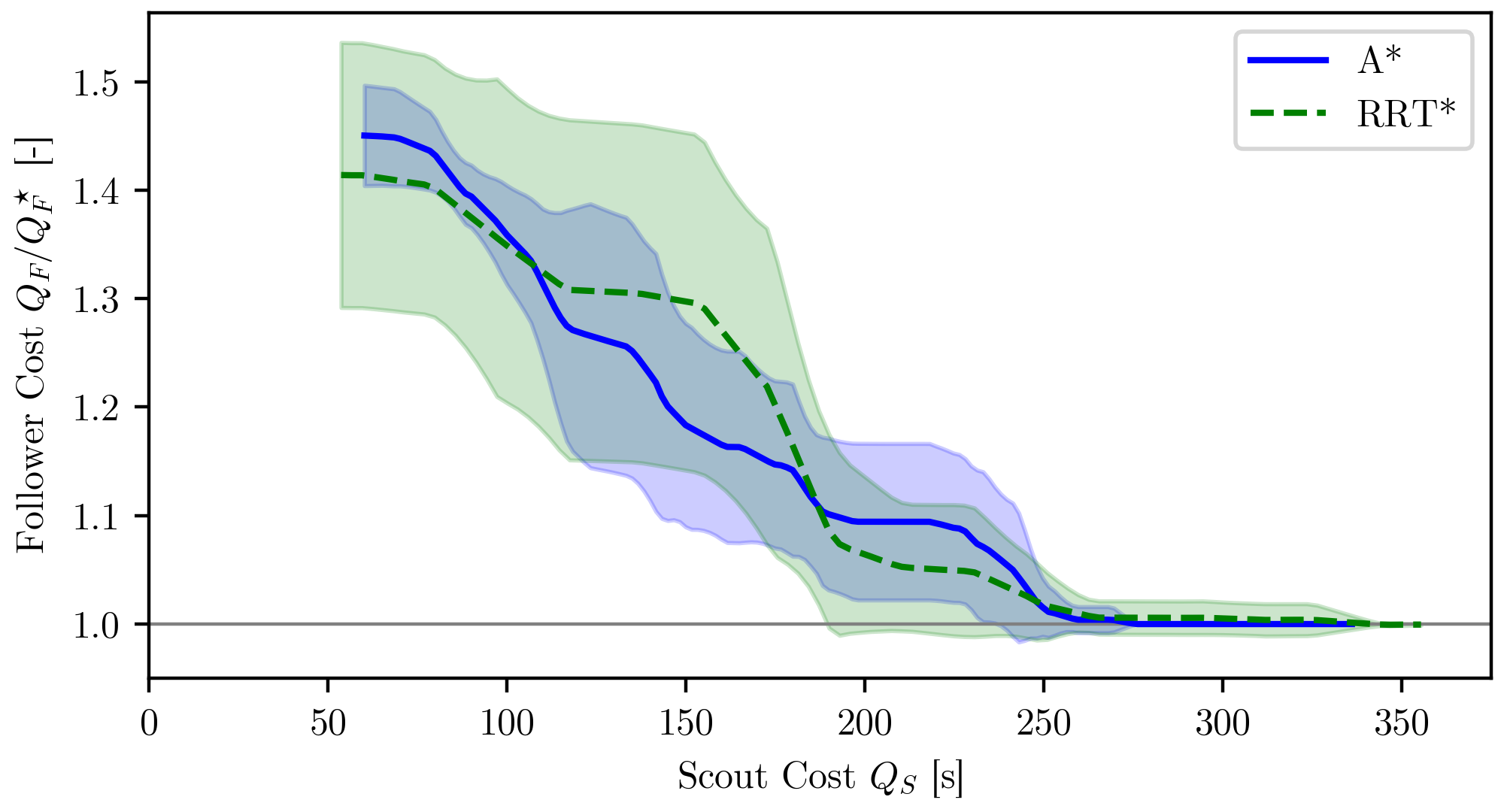}
    \caption{Scouting performance for a deterministic $A^*$ compared to a probabilistic $RRT^*$ follower planner.}
    \label{fig:prm_vs_rrt}
    \vspace{-5pt}
\end{figure}

\begin{figure}
\centering
    \includegraphics[width=0.49\linewidth]{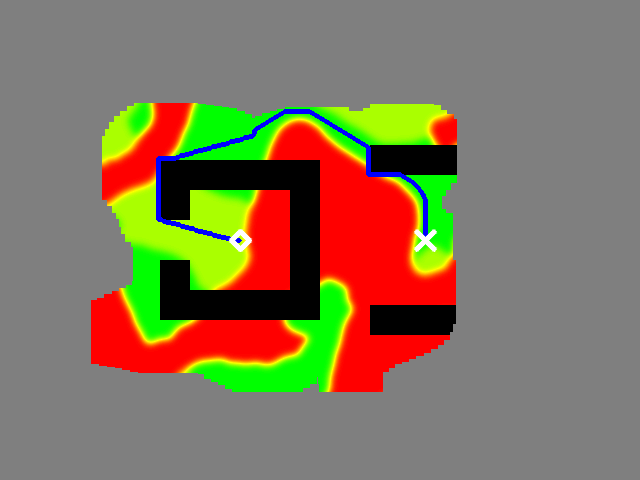}
    \includegraphics[width=0.49\linewidth]{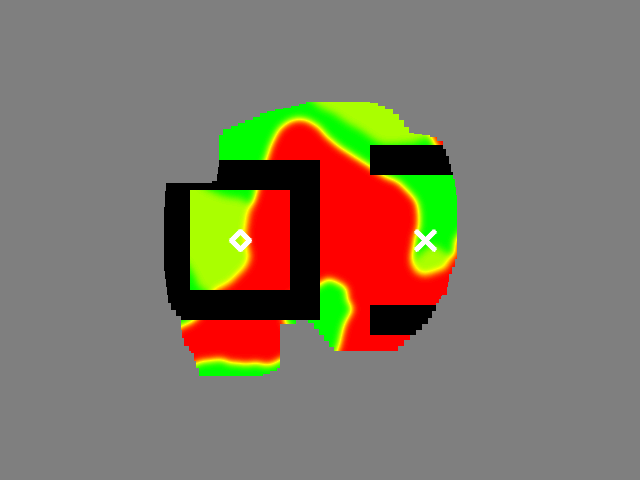}
  \caption{Explored space until termination for the open (left, optimal path found) and closed box (right, problem infeasible) environments. The start and goal is marked as a rhombus, and a cross, respectively.}
  \label{fig:artificial_maps}
  \vspace{-10pt}
\end{figure}


\section{Conclusions}
\label{sec:conclusion}

In this work, we present a novel \ac{IPP} formulation for a collaborative robotic team consisting of an aerial scout and a follower robot in unknown scenes. 
We derive theoretical guarantees for our method find an optimal path for the follower robot on termination.
We show in thorough experimental evaluation that our approach outperforms existing methods and the derived guarantees hold in practice, leading to a significant decrease in time to explore the optimal path as well as time to termination. 

However, our current approach assumes perfect sensing and state estimation to derive rigorous guarantees. 
Introducing uncertainty and extending the proofs to probabilistic guarantees would better reflect real-world operations and increase robustness against \eg misclassification of the terrain.
We also observed that if there are two similar-cost homotopy classes of the follower path, the optimistic path alternates between them, leading to less effective exploration. 
This could be addressed using a momentum-based or multi-modal planner in the future.




{\small
\bibliographystyle{IEEEtran}
\bibliography{IEEEfull,references}
}


\end{document}